\newcommand{\E}{\mathbb{E}}
\newcommand{\prob}{\mathbb{P}}
\newcommand{\argmin}{\mathop{\arg\min}}
\newcommand{\argmax}{\mathop{\arg\max}}
\newcommand{\ld}{\mathcal{L}}
\newcommand{\ud}{\mathcal{U}}
\newcommand{\cd}{\mathcal{D}}
\newcommand{\sspace}{\mathcal{S}}  
\newcommand{\aspace}{\mathcal{A}}  
\newcommand{\obo}{\mathcal{B}^*}  
\newcommand{\oblo}{\mathcal{B}_{\ell}^*}  
\newcommand{\event}{\mathcal{E}}
\newcommand{\rub}{R_{\max}}
\newcommand{\vub}{V_{\max}}
\newcommand{\fclass}{\mathcal{Q}}
\newcommand{\indicator}{\mathbb{I}}
\newcommand{\rfun}{R}
\newcommand{\tfun}{\mathcal{P}}
\newcommand{\spl}{\textrm{{\tiny SPL}}}
\newcommand{\sug}{\textrm{{\tiny SUG}}}
\newcommand{\ini}{\textrm{{\tiny INI}}}
\newcommand{\aux}{\textrm{{\tiny AUX}}}
\newcommand{\taw}{\textrm{{\tiny TAW}}}
\newcommand{\ppl}{\textrm{{\tiny PPL}}}
\newcommand{\uds}{\textrm{{\tiny UDS}}}
\newtheorem{theorem}{Theorem}
\newtheorem{lemma}{Lemma}
\newtheorem{corollary}{Corollary}
\newtheorem{assumption}{Assumption}
\def\spacingset#1{\renewcommand{\baselinestretch}%
{#1}\small\normalsize} \spacingset{1}
\newcommand{\blind}{1}
\begin{document}

\if1\blind
{
\title{\Large{\textbf{Semi-pessimistic Reinforcement Learning}}} 
\author{
Jin Zhu$^1$, Xin Zhou$^2$, Jiaang Yao$^3$, Gholamali Aminian$^4$, \\
\bigskip
Omar Rivasplata$^5$, Simon Little$^3$, Lexin Li$^{2*}$, Chengchun Shi$^{1}\thanks{Co-corresponding authors}$ \\
\normalsize{$^1$London School of Economics and Political Science} \\
\normalsize{$^2$University of California at Berkeley, $^3$University of California at San Francesco} \\
\normalsize{$^4$The Alan Turing Institute, $^5$University of Manchester}
}
\date{}
\maketitle
} \fi

\if0\blind
{
\title{\Large{\textbf{Semi-pessimistic Reinforcement Learning}}}
\author{
\bigskip
\vspace{1.16in}
}
\date{}
\maketitle
} \fi

\spacingset{1.2} 
\begin{abstract}
Offline reinforcement learning (RL) aims to learn an optimal policy from pre-collected data. However, it faces challenges of distributional shift, where the learned policy may encounter unseen scenarios not covered in the offline data. Additionally, numerous  applications suffer from a scarcity of labeled reward data. Relying on labeled data alone often leads to a narrow state-action distribution, further amplifying the distributional shift, and resulting in suboptimal policy learning. To address these issues, we first recognize that the volume of unlabeled data is typically substantially larger than that of labeled data. We then propose a semi-pessimistic RL method to effectively leverage abundant unlabeled data. Our approach offers several advantages. It considerably simplifies the learning process, as it seeks a lower bound of the reward function, rather than that of the Q-function or state transition function. It is highly  flexible, and can be integrated with a range of model-free and model-based RL algorithms. It enjoys the guaranteed improvement when utilizing vast unlabeled data, but requires much less restrictive conditions. We compare our method with a number of alternative solutions, both analytically and numerically, and demonstrate its clear competitiveness. We further illustrate with an application to adaptive deep brain stimulation for Parkinson's disease. 
\end{abstract}
\bigskip

\noindent{\bf Key Words:} 
Deep brain stimulation; Reinforcement learning; Pessimistic principle; Semi-supervised learning; Uncertainty qualification.

\newpage
\spacingset{1.8} 

\section{Introduction}

Reinforcement learning (RL) aims to train an intelligent agent to make optimal decisions in a given environment to maximize some long-term reward. It provides a powerful paradigm for policy optimization in sequential decision making \citep{sutton2018reinforcement}. Online RL learns an optimal policy by actively interacting with the environment and collecting new data through exploration and exploitation. By contrast, offline RL learns an optimal policy from a fixed, pre-collected dataset without further interaction with the environment. Offline RL offers distinct advantages in high-stakes applications, such as in healthcare, autonomous driving, and finance, when real-time data collection is expensive, risky, or impractical. However, it faces challenges of distributional shift, where the learned policy may encounter unseen scenarios not covered by the offline data. In addition, most existing offline RL algorithms are designed for settings where the rewards are fully observed and labeled, a presumption that often falls short in real-world applications \citep{Levine2020}.  

Our scientific motivation comes from deep brain stimulation (DBS), a highly effective treatment strategy for Parkinson's disease (PD) as well as other neurological disorders. DBS involves a neurosurgical procedure that implants electrodes in specific areas (basal ganglia) deep in the brain, which are connected to a battery-powered stimulator. When turned on, the stimulator sends electrical pulses to regulate the aberrant network signals. Since 1990s, DBS has been successfully used to mitigate and treat PD, shown to effectively reduce cardinal PD symptoms including tremor, rigidity and bradykinesia \citep{Okun2012}. Current clinical use of DBS has a trained physician to manually program the stimulation parameters, e.g., the stimulation amplitude. Once programmed and turned on, the device delivers constant electrical stimulation without regard to dynamic changes in neural activities or symptoms of patients, which can lead to unwanted side effects \citep{Neumann2023Adaptive}. More recently, a new DBS treatment strategy, namely, \emph{adaptive deep brain stimulation}, is emerging. It aims to adaptively adjust stimulation parameters in real-time, based on personalized neurophysiological biomarkers such as cortical or subcortical field potentials collected through the implanted electrode sensors, and individual patient's symptoms collected through wearable devices \citep{Little2013, Oehrn2024Chronic}. Adaptive DBS points to a more effective and promising direction of neuromodulation. 

RL provides a natural framework for adaptive DBS optimization. In the language of RL, the goal of adaptive DBS is to develop a stimulation policy (action) based on patient's signals (state) so as to optimize the treatment effect (reward). However, numerous challenges hinder the effective application of existing RL algorithms to the DBS setting. One such challenge is the limited availability of reward information. In our study, due to the device configuration, the patient's neural signals (state) are recorded every second, whereas the patient's Bradykinesia score (reward), which quantifies the severity of movement slowness resulting from the treatment, is measured every 120 seconds. As a result, there is only a limited amount of labeled reward data but a vast amount of unlabeled data. Relying on labeled data alone often leads to a narrow state-action distribution, and the distributional shift between the behavior policy and the optimal policy poses significant hurdles to effective policy learning. Another challenge is the relatively limited computing capacity of the stimulator device. These constraints call for an effective yet simple semi-supervised offline RL method that can fully leverage available data and improve policy optimization.

There is a rich literature on offline RL, including model-free approaches and model-based approaches; see, e.g., \citet{chen2019information,luckett2020estimating, mou2020,hao2021sparse, liao2022batch,li2023quasioptimal, wang2023projected,chen2024reinforcement,shi2024value, zhang2023estimation, zhou2024estimating, zhou2024reward}, among many others. Notably, there are a family of RL algorithms that employ the pessimistic principle to mitigate distributional shift in the offline setting \citep{yu2020mopo, jin2021pessimism, uehara2022pessimistic, jin2022policy, yin2022near, chen2023steel, li2024settling, wang2024pessimistic, wu2024neural}. There is also a rich literature on semi-supervised learning; see \citet{yang2023survey} for a review. Notably, pseudo labeling \citep{lee2013pseudo, kou2023how} stands out as a widely used technique thanks to its practical effectiveness, where the key idea is to generate labels for the unlabeled data based on a model learnt from the labeled data. There has also been some recent development integrating semi-supervised learning and offline RL, with two major lines of solutions. The first line imputes the reward with different imputation techniques, then applies a standard RL for policy learning \citep{konyushkova2020semi, sonabend2020semiope, sonabend2020semiopo, yu2022leverage, hu2023the}. The second line learns a state representation from the unlabeled data, then combines it with the labeled data for policy optimization \citep{yang2021representation, yang2022trail}. However, the existing solutions either rely on some restrictive conditions such as full coverage or linear Markov decision process, or require some complicated uncertainty quantification and uniform lower bound for the action-value function, i.e., the Q-function. These constraints have restricted the broader adoption of semi-supervised RL to applications such as adaptive DBS. 

In this article, we propose a new semi-supervised RL approach, which adopts what we term the \emph{semi-pessimistic} principle, and leverages both labeled and unlabeled data to tackle the challenges of distributional shift and missing reward. Our proposal enjoys several advantages. First, it stands out for its simplicity, as our key idea behind the semi-pessimism is to quantify the uncertainty of the reward function. In contrast, in full pessimism, it is to sequentially quantify the uncertainty of the series of estimated Q-functions or state transition functions, which is much more complex. This simplification makes our method more suitable for practical deployment in settings like adaptive DBS. Second, it is highly flexible. Once we obtain the imputed reward based on its quantified uncertainty, we apply a standard RL algorithm for policy learning, enabling the development of both model-free and model-based RL policy optimization algorithms. We illustrate with two such examples in our implementation. Third, we establish rigorous theoretical guarantees of the proposed RL algorithms, and we show that utilizing the vast unlabeled data can markedly improve both the accuracy of uncertainty quantification and the quality of policy learning. In addition, our theory requires much less restrictive conditions. It relies on a new core concept, the \emph{semi-coverage}, that we propose, which stands a middle ground in the spectrum of coverage. It is less stringent than the full coverage condition, and offers a tradeoff compared to the partial coverage. Moreover, it only requires a \emph{pointwise} uncertainty quantification, which is much weaker than the \emph{uniform} uncertainty quantification typically imposed in the literature. It does not require the Markov decision process to have to be linear either. Finally, we compare our proposed method with a range of existing semi-supervised offline RL solutions, both analytically and numerically. We demonstrate the competitiveness of our approach in synthetic and MuJoCo environments. We then revisit the motivating DBS study and show that the policy learnt by our method achieves a higher cumulative reward for patients, thus offering a promising solution for adaptive DBS for Parkinson's disease.

The rest of the article is organized as follows. Section \ref{sec:preliminary} lays out the problem setup and the pessimistic principle. Section \ref{sec:method} develops our semi-pessimistic RL approach. Section \ref{sec:regret} establishes the theoretical guarantees through a regret analysis. Section \ref{sec:numericals} presents the numerical studies, including the adaptive DBS application. Section \ref{sec:discussions} discusses some extensions. The Supplementary Appendix collects all technical proofs and additional results.

\section{Problem Setup and Pessimistic Principle}
\label{sec:preliminary}

\subsection{Problem setup}
\label{sec:setup}

We first lay out the problem setup. We model the sequential data over time as a time-homogeneous Markov decision process (MDP), and denote it as $(\sspace, \aspace, \rfun, \tfun, \gamma)$, where $\sspace$ and $\aspace$ are the state and action spaces, respectively, $\rfun(s,a)$ is the expected reward for a given state-action pair $(s, a)\in \sspace\times\aspace$, $\tfun(\cdot|s,a)$ specifies the conditional probability distribution of transitioning to a future state, and $\gamma\in (0,1)$ is the discount factor that balances the immediate and future rewards. For simplicity, we assume both the state and action spaces are discrete. When these spaces are continuous, we replace the probability mass functions with probability density functions. We consider the semi-supervised setting with two parts of the data: a labeled dataset, denoted as $\mathcal{L}$, containing a small set of (state, action, reward, next state) tuples that follow the aforementioned MDP, and an unlabeled dataset, denoted as $\mathcal{U}$, with a large set of (state, action, next state) tuples that follow the same MDP but without the reward component. Let $n_\ld$ and $n_\ud$ denote the size of $\ld$ and $\ud$, respectively.

For a given offline dataset $\mathcal{D}$, let $d_{\mathcal{D}}$ denote its state-action data distribution, i.e., $d_{\mathcal{D}}(s,a)=|\mathcal{D}|^{-1}\sum\limits_{(S,A)\in \mathcal{D}} \mathbb{I}(S=s,A=a)$, where $\mathbb{I}(\cdot)$ is an indicator function, $|\mathcal{D}|$ is the cardinality of $\mathcal{D}$, and the sum $\sum\limits_{(S,A)\in \mathcal{D}}$ is taken over all state-action pairs in $\mathcal{D}$. Let $\pi(\cdot|s):\mathcal{S} \rightarrow \mathcal{A}$ denote a policy that determines the conditional probability mass/density function of actions given a particular state $s$. Our goal is to identify the optimal policy $\pi^*$, defined as the maximizer of the $\gamma$-discounted expected cumulative reward, $J(\pi) = \sum_{t=0}^{\infty} \gamma^t \E^{\pi} \big[R(S_t, A_t)\big]$, where the expectation $\E^{\pi}$ is computed under the assumption that actions are selected according to $\pi$ such that $(S_t,A_t)$ is the state-action pair following the MDP and $\pi$ at time $t$. We define the Q-function under a given policy $\pi$ as $Q^\pi(s, a)= \sum_{t=0}^{\infty} \gamma^t \mathbb{E}^\pi [\rfun(S_t,A_t)| S_0=s, A_0=a]$, and use $Q^{*}$ to denote the optimal Q-function, i.e., the Q-function under $\pi^*$. 
Given a reference initial state distribution $\rho_0$ with full support over $\sspace$ and a policy $\pi$, let $d^{\pi}$ denote the $\gamma$-discounted visitation distribution, i.e., the probability distribution of visiting a state-action pair $(s,a)$ under $\pi$, assuming the initial state follows $\rho_0$, given by $(1-\gamma)\sum\limits_{t\ge 0}\gamma^t \prob^{\pi}(A_t=a,S_t=s|S_0\sim \rho_0)$.

\subsection{The pessimism principle and current challenges}
\label{sec:thepessimism}

Q-learning \citep{watkins1992q} is commonly employed in offline RL. However, its effectiveness depends on a critical full coverage condition, i.e., the state-action distribution presented in the offline dataset $\mathcal{D}$ must comprehensively cover the state-action distributions that would be induced by all  policies. This assumption is often difficult to satisfy in practice. When certain state-action pairs are less explored in $\mathcal{D}$, accurately estimating their Q-values becomes difficult, which in turn leads to overestimation of the Q-values, and ultimately results in a suboptimal policy.

The pessimistic principle is a key strategy that helps relax the full coverage condition. A common solution is to learn a pessimistic Q-function, denoted by $\widehat{Q}$, that lower bounds the true Q-value $Q^*$ with a high probability. This helps reduce the risk of overestimating the value of poorly sampled state-action pairs, and thus enhances the performance of the resulting policy $\widehat{\pi}$. However, obtaining a proper conservative $\widehat{Q}$ poses significant challenges. On the one hand, for all $(s, a)\in \sspace \times \aspace$, $\widehat{Q}$ needs to lower bound $Q^*$ with a high probability. On the other hand, it is important that $\widehat{Q}$ should not be excessively conservative, since the regret of the resulting estimated optimal policy $J(\pi^*)-J(\widehat{\pi})$ is proportional to the discrepancy between $Q^*$ and $\widehat{Q}$ \citep{jin2021pessimism}.

A popular approach to derive a tight lower bound is \emph{sequential} uncertainty quantification \citep[see e.g.,][]{jin2021pessimism,bai2022pessimistic,zhou2022optimizing}. This family of methods sequentially penalize the Q-function estimator by its uncertainty. Specifically, at the $k$th iteration, they learn a lower confidence bound $\widehat{Q}^{(k)}$ of the following Q-function (denoted by $Q^{(k)}$),
\begin{eqnarray*}
Q^{(k)}(s,a)=\mathcal{R}(s,a)+\gamma \E_{S'\sim \tfun(\cdot|s,a)}\max_a \widehat{Q}^{(k-1)}(S',a),
\end{eqnarray*}
where $\widehat{Q}^{(k-1)}$ is the lower confidence bound at the $(k-1)$th iteration. Nevertheless, such a sequential procedure introduces two additional complexities. First, the uncertainty quantification for $Q^{(k)}$ is particularly challenging, because $Q^{(k)}$ itself is parameter-dependent and relies on the previously estimated lower bounds. As a result, each iteration's outcome influences the subsequent estimation, creating a chain of dependency. Second, $\widehat{Q}^{(k)}$ must \emph{uniformly} lower bound $Q^{(k)}$ across all iterations $k$. Maintaining this uniformity is also challenging. For a significance level $\alpha\in (0,1)$, suppose $\widehat{Q}^{(k)}$ is a lower $\alpha$th confidence bound for $Q^{(k)}$ marginally for each $k$. Then, these bounds do not hold uniformly with probability $1-\alpha$. A potential solution is to apply the Bonferroni's correction, setting each $\widehat{Q}^{(k)}$ to a lower $(\alpha/K)$th bound, where $K$ is the total number of iterations. However, this leads to an overly conservative bound, particularly with a large $K$.

\section{Semi-pessimistic pseudo labeling}
\label{sec:method}

\subsection{Our proposal: key idea}
\label{sec:key-ideas}

Our key idea is to learn a lower bound for the reward function based on both labeled and unlabeled data, then devise a pessimistic reward as the pseudo label. We refer to our method as \emph{semi-pessimistic pseudo labeling} (SPL). Based on the pseudo labels, we develop both model-free and model-based semi-supervised offline RL algorithms accordingly. 

Our method is notable for its simplicity, and thus is particularly suitable for adaptive DBS type applications. It greatly differs from the existing pessimistic offline RL solutions, in that it avoids sequential quantification of the uncertainty of the Q-function, or the uncertainty of the state transition function. Our key observation is that, when the size of the unlabeled data increases, the impact of not accounting for the uncertainty in the Q-function or the state transition function is to diminish. We first illustrate with a simulation example, then formally introduce the notion of semi-coverage for a more rigorous analysis.

\begin{figure}[t!]
\centering
\includegraphics[width=0.9\linewidth]{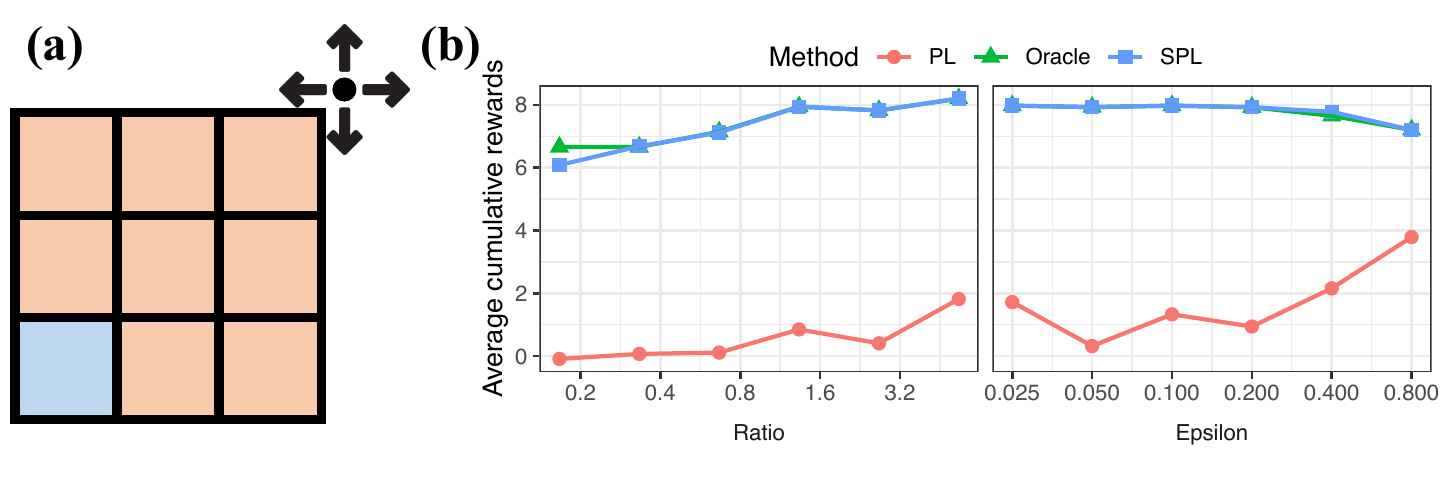}
\vspace{-10pt}
\caption{(a): Graphical illustration of the environment. (b): The average return with $n_\ld = 120$; left panel: the ratio $n_\ud / n_\ld$ (horizontal axis) varies; right panel: the value of $\epsilon$ (horizontal axis) varies with $n_\ud = 150$.}
\label{fig:toy-example}
\end{figure}

We consider a simple environment, where the state space is a $3 \times 3$ grid, and the action space comprises five discrete actions: $\{\textup{stay, right, up, left, down}\}$. The objective is to reach the target grid, located in the left-bottom corner, within two steps. The labeled data $\ld$ is generated by the $\epsilon$-greedy optimal policy, and the unlabeled data $\ud$ is generated through random exploration. More information regarding this environment is given in Appendix~\ref{sec:detail-toy-env}. Figure \ref{fig:toy-example}(a) visualizes the state and action spaces of this environment. Figure \ref{fig:toy-example}(b) shows the average returns over 20 replications of three methods, the pseudo labeling (PL) without utilizing the pessimism principle, an oracle pessimistic pseudo labeling method with a known transition function $\tfun$ (Oracle), and our proposed method (SPL). Figure \ref{fig:toy-example}(b), left panel, shows the average returns when the ratio of $n_\ud / n_{\ld}$ varies. In this case, we see that SPL clearly outperforms PL, demonstrating the advantage of adopting the semi-pessimistic principle. In addition, the oracle method is slightly better than SPL, but the advantage quickly diminishes as the size of $\ud$ increases. Figure \ref{fig:toy-example}(b), right panel, shows the average returns when the value of $\epsilon$ in the $\epsilon$-greedy algorithm for generating $\ld$ varies. Here a larger value of $\epsilon$ leads to a state-action distribution closer to that under random exploration, making the environment more likely to satisfy the full coverage condition. In this case, we see that SPL performs more closely to PL when $\epsilon$ is large, but much better when $\epsilon$ decreases, demonstrating again the importance of the semi-pessimistic principle when the full coverage condition is violated.

\subsection{Coverage conditions}
\label{sec:coveragecondition}

Nearly all existing offline RL algorithms rely on some forms of coverage conditions to consistently identify the optimal policy $\pi^*$. We first introduce the \emph{full coverage} condition that is typically required by non-pessimistic type offline RL algorithms, and the \emph{partial coverage} condition that is required by pessimistic type offline RL algorithms.

\noindent\textbf{Full coverage condition}. The data distribution covers the visitation distribution induced by any policy $\pi$, i.e., $\sup\limits_{\pi,s,a} d^{\pi}(s, a)/d_{\mathcal{D}}(s, a)<\infty$.
\bigskip

\noindent\textbf{Partial coverage condition}. The data distribution covers the visitation distribution induced by the optimal policy $\pi^*$, i.e., $\sup\limits_{s,a} d^{\pi^*}(s,a)/d_{\mathcal{D}}(s,a)<\infty$.

Next, we formally introduce the \emph{semi-coverage} condition, which offers a middle ground between the full and partial coverage conditions, and is essential for our method.

\begin{assumption}[\textbf{Semi-coverage condition}]\label{con:coverage}
Suppose (a) $B_{\ld}^* \equiv \sup\limits_{s,a} {d^{\pi^*}(s,a)} / {d_{\ld}(s,a)}<\infty$; and (b) $B_{\mathcal{D}} \equiv \sup\limits_{\pi,s,a} {d^{\pi}(s,a)} / {d_{\ld\cup \ud}(s,a)}<\infty$.
\end{assumption}

\noindent
We note that, this condition essentially requires a partial coverage on $\ld$, and a full coverage on the combination of $\ld$ and $\ud$. This is reasonable, because the unlabeled data is usually much larger in size than the labeled data, and thus the combined data contains a much wider range of state-action pairs than the labeled data alone. Additionally, as the size of the unlabeled data increases, the likelihood of satisfying (b) increases. Figure~\ref{fig:coverage} shows a simple illustration, which visualizes the state-action distribution of $\ld$, $\ud$ and $\ld \cup \ud$ on the discrete MDP with five states and three actions. Each state-action pair $(s, a)$ is colored green if $d_{\cd}(s, a) > 0$, indicating its presence in the data, and orange if absent. Regardless of the partial coverage of $d_{\ld}(s, a)$, the combination with $\ud$ results in a full coverage for $d_{\ld \cup \ud}(s, a)$. We also remark that we may relax this semi-coverage condition, and we discuss this extension with more details in Section \ref{sec:discussions}.

\begin{figure}[t!]
\centering
\includegraphics[width=0.7\linewidth,height=1.1in]{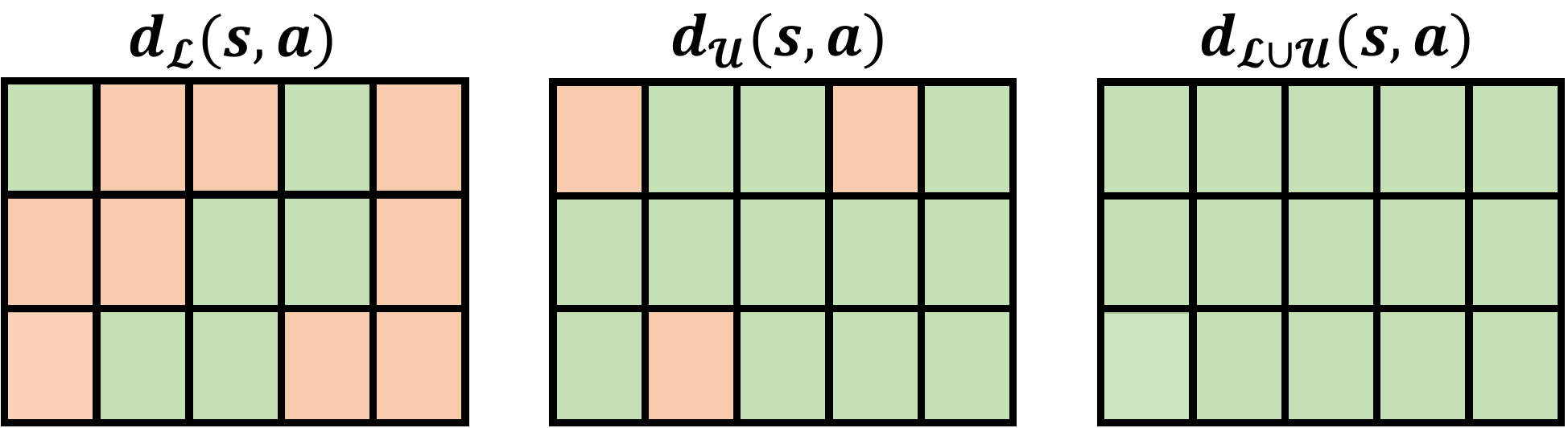}
\caption{Visualization of the state-action distribution of $\ld$, $\ud$ and $\ld \cup \ud$ on an MDP with five states (horizontal axis) and three actions (vertical axis). Each state-action pair $(s, a)$ is colored green if $d_{\cd}(s, a) > 0$, indicating its presence in the data, and orange if absent.}
\label{fig:coverage}
\end{figure}

Next, we show that the state-action distribution in the combined data sufficiently covers that in the labeled data, demonstrating that (b) is indeed weaker than the full coverage requirement on $\ld$ alone. Its proof is intuitive: every state-action pair presented in $\ld$ is also included in $\ld\cup \ud$, thus ensuring the coverage. 

\begin{lemma}\label{lemma:coverage}
For any $(s,a)\in\mathcal{S}\times\mathcal{A}$ with $d_{\ld}(s,a)>0$, we have $d_{\ld\cup \ud}(s,a)>0$. 
\end{lemma}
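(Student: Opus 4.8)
The plan is to simply unwind the definition of the empirical state–action distribution $d_{\mathcal{D}}$ given in Section~\ref{sec:setup} and exploit the monotonicity of the support under enlarging the dataset. Concretely, recall that for any offline dataset $\mathcal{D}$ one has $d_{\mathcal{D}}(s,a)=|\mathcal{D}|^{-1}\sum_{(S,A)\in \mathcal{D}}\indicator(S=s,A=a)$, with $0<|\mathcal{D}|<\infty$. Hence $d_{\mathcal{D}}(s,a)>0$ if and only if the count $N_{\mathcal{D}}(s,a):=\sum_{(S,A)\in \mathcal{D}}\indicator(S=s,A=a)$ is at least $1$, i.e.\ at least one tuple in $\mathcal{D}$ has state–action pair equal to $(s,a)$.

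The first step is to apply this equivalence to $\mathcal{D}=\ld$: from $d_{\ld}(s,a)>0$ we obtain $N_{\ld}(s,a)\ge 1$, so there exists a tuple in $\ld$ whose state–action component is $(s,a)$. The second step is to observe that every tuple of $\ld$ is also a tuple of the combined dataset $\ld\cup\ud$, so that $N_{\ld\cup\ud}(s,a)\ge N_{\ld}(s,a)\ge 1$ (this inequality is just additivity of the counting sum over the two parts of the data, and holds whether the union is read as a set union or a multiset union). The third and final step is to divide by $|\ld\cup\ud|$, which is finite and positive since $\ld$ is nonempty, to conclude $d_{\ld\cup\ud}(s,a)=|\ld\cup\ud|^{-1}N_{\ld\cup\ud}(s,a)>0$, as claimed.

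There is essentially no hard part here: the statement is a direct consequence of the fact that adding data can only enlarge the observed support, and the proof amounts to reading off the definitions. The only mild bookkeeping point worth being explicit about is the interpretation of $\ld\cup\ud$ and the corresponding cardinality $|\ld\cup\ud|$, but under either the set- or multiset-union convention the counting argument $N_{\ld\cup\ud}(s,a)\ge N_{\ld}(s,a)$ goes through unchanged, so the conclusion is insensitive to that choice.
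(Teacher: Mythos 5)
Your proof is correct and matches the paper's argument, which is exactly the one-line observation that every state--action pair present in $\ld$ is also present in $\ld\cup\ud$; you have simply spelled out the counting and normalization steps more explicitly. No issues.
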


\subsection{Semi-supervised uncertainty quantification}
\label{sec:semi-supervised-uncertainty-quantification}

Next, we quantify the uncertainty $\Delta$ of the reward function $\rfun$, and construct a reward estimator $\widehat\rfun$ that lower bounds the reward $\rfun$ in the following sense. 

\begin{assumption}[Uncertainty quantification]\label{con:uncertainty}
For a significance level $\alpha$ and any state-action pair $(s, a)\in \mathcal{S}\times \mathcal{A}$, the event $\widehat\rfun(s,a) \le \rfun(s,a)$ holds with probability at least $1-\alpha$.
\end{assumption}

\noindent
We remark that, this condition only requires a \emph{pointwise} uncertainty quantification, i.e., the event holds marginally for each $(s, a)$ with probability $1-\alpha$. It can typically be achieved using a one-sided Wald interval. It is significantly weaker than the \emph{uniform} uncertainty quantification condition imposed in \citet{jin2021pessimism}, which instead requires the uncertainty quantification to hold uniformly across all $(s, a)$ with the same probability and is much more difficult to satisfy. Moreover, to ensure the regret of the estimated policy diminishes as the sample size increases, it is necessary for $\alpha$ to decrease to zero; see Corollary~\ref{coro1} in Section \ref{sec:regret}. 

We first present an initial uncertainty quantification that serves as a benchmark. Consider a simple model $g^\top(s, a)\theta$ for the reward $\rfun$ given the state $S$ and action $A$, where $g(s, a)$ is a nonlinear mapping from $\sspace \times \aspace$ to $\mathbb{R}^d$, and $\theta$ is the coefficient vector. In our implementation, we use the random Fourier features \citep{rahimi2007random} to construct $g$. We then obtain the ordinary least square (OLS) estimator $\widehat{\theta}_\ini$ for $\theta$, and the corresponding reward estimator  $\widehat\rfun_\ini(s,a) = g^\top(s, a)\widehat{\theta}_\ini$. The uncertainty of $\widehat\rfun_\ini(s,a)$ is quantified by 
\begin{equation}\label{eq:c-uncertainty}
\Delta_\ini(s, a) = \sqrt{n_\ld^{-1} g^\top(s, a) \ \widehat{\Sigma}_\ini \ g(s, a)},
\end{equation}
where $\widehat{\Sigma}_\ini$ is the sandwich variance estimator for $\widehat{\theta}_\ini$ \citep{kauermann2001note}. We remark that, for this uncertainty quantification method, only the labeled data $\ld$ is used. 

We next introduce the semi-supervised uncertainty quantification (SUQ) method we employ, following a similar strategy as in \citet{abhishek2018efficient,zhang2019semi,anastasios2023ppi}. The procedure consists of four main steps. In the first step, we fit an auxiliary and more flexible model, e.g., random forests or deep neural networks, for the reward given the state and action. In our implementation, we choose random forests. Denote the resulting estimator as $\widehat\rfun_\aux(s,a)$, and this fitting only uses the labeled data $\ld$. In the second step, we employ OLS to model the residual $\rfun - \widehat\rfun_\aux(s,a)$ given $g(s, a)$, again only using the labeled data $\ld$. Denote the resulting estimator as $\widehat\theta_\ld$, and its corresponding sandwich variance estimator as $\widehat\Sigma_\ld$. In the third step, we employ OLS to model $\widehat\rfun_\aux(s,a)$ given $g(s, a)$, but this time only using the unlabeled data $\ud$. This is doable because we can generate $\widehat\rfun_\aux(s,a)$ given the state-action pairs in the unlabeled data $\ud$ based on the auxiliary model learnt from the labeled data $\ld$ in the first step. Denote the resulting estimator as $\widehat\theta_\ud$, and its corresponding sandwich variance estimator as $\widehat\Sigma_\ud$. In the final step, we construct our SUQ estimator of $\theta$ and $\rfun$ as
\begin{eqnarray}\label{eq:ppi-est}
\widehat\theta_\sug = \widehat\theta_\ld + \widehat\theta_\ud, \quad\quad \widehat\rfun_\sug(s,a) = g^\top(s, a)\widehat{\theta}_\sug.
\end{eqnarray}
The uncertainty of $\widehat\rfun_\sug(s,a)$ is quantified by
\begin{equation}\label{eq:ppi-uncertainty}
\Delta_\sug(s, a) = \sqrt{n_\ld^{-1} g^\top(s,a) \ \widehat{\Sigma}_{\ld} \ g(s,a) + n_\ud^{-1} g^\top(s,a) \ \widehat{\Sigma}_{\ud} \ g(s,a)}.
\end{equation}

Given $\widehat\rfun_\sug$ and its uncertainty quantification $\Delta_\sug(s, a)$, we impute the missing reward in $\ud$ with the following pessimistic reward function,
\vspace{-0.01in}
\begin{equation} \label{eq:pessimistic-reward}
\widehat\rfun_\spl(s, a) = \widehat\rfun_\sug(s, a) - z_{1-\alpha/2}\Delta_\sug(s, a),
\end{equation}
where $z_{1-\alpha/2}$ is the $(1-\alpha/2)$th quantile of standard Gaussian distribution and $\alpha$ is the significance level. Following \citet{anastasios2023ppi}, we can show that $\widehat\rfun_\spl(s, a)$ satisfies Assumption \ref{con:uncertainty}, provided that OLS offers a reasonable approximation of the true reward. We summarize the above procedure in Algorithm \ref{alg:ppi-uncertainty}.

\begin{algorithm}[t!]
\caption{Semi-supervised uncertainty quantification.}
\label{alg:ppi-uncertainty}
\begin{algorithmic}[1]
\STATE Train an auxiliary model for the reward given the state and action with a flexible machine learning method, e.g., random forests or deep neural networks, using only the labeled data $\ld$; denote the resulting estimator as $\widehat\rfun_\aux(s,a)$.
\STATE Use OLS to model the residual $\rfun - \widehat\rfun_\aux(s,a)$ given $g(s, a)$, using only the labeled data $\ld$. Denote the resulting estimator as $\widehat\theta_\ld$, and its sandwich variance estimator as $\widehat\Sigma_\ld$.
\STATE Use OLS to model $\widehat\rfun_\aux(s,a)$ given $g(s, a)$, using only the unlabeled data $\ud$. Denote the resulting estimator as $\widehat\theta_\ud$, and its sandwich variance estimator as $\widehat\Sigma_\ud$.
\STATE Compute the SUG estimator $\widehat\rfun_\sug(s,a)$ using \eqref{eq:ppi-est}, the uncertainty quantification $\Delta_{\textup{SUQ}}(s, a)$ using \eqref{eq:ppi-uncertainty}, and the SPL reward estimator $\widehat\rfun_\spl(s, a)$ using \eqref{eq:pessimistic-reward}. 
\end{algorithmic}
\end{algorithm}

Comparing SUG with the benchmark reveals additional insight on the advantages of utilizing both labeled and unlabeled data. We first compare the coefficient estimators $\widehat\theta_\sug$ and $\widehat\theta_\ini$. It is straightforward to see that $\widehat\theta_\sug$ can be rewritten as 
\begin{eqnarray*}\label{eq:theta-rewrite}
\widehat\theta_\sug = \widehat\theta_\ini - \widehat\vartheta_\ld + \widehat\theta_\ud, 
\end{eqnarray*}
where $\widehat\vartheta_\ld$ is the OLS estimator of modeling $\widehat\rfun_\aux(s,a)$ given $g(s, a)$ using only the labeled data $\ld$. Since $\widehat\vartheta_\ld$ and $\widehat\theta_\ud$ share the same mean, the inclusion of them with opposite signs ensures the unbiasedness asymptotically. Meanwhile, $\widehat\vartheta_\ld$ helps reduce the variance of $\widehat\theta_\ini$, and the variance of $\widehat\theta_\ud$ becomes negligible due to the large amount of unlabeled data.

We next compare the uncertainty quantifications $\Delta_\sug(s, a)$ in \eqref{eq:ppi-uncertainty} and $\Delta_\ini(s, a)$ in \eqref{eq:c-uncertainty}. The former utilizes both the labeled data $\ld$ and the unlabeled data $\ud$, while the latter only utilizes $\ld$. Consequently, our quantification $\Delta_\sug$ is expected to be tighter compared with $\Delta_\ini$. This is because, when $n_\ud \gg n_\ld$, \eqref{eq:ppi-uncertainty} is asymptotically equivalent to $\sqrt{n_\ld^{-1} g^\top(s,a) \ \widehat{\Sigma}_{\ld} \ g(s,a)}$. Besides, $\widehat{\Sigma}_{\ld} \prec \widehat{\Sigma}$ in the semidefinite order when the prediction $\widehat\rfun_\aux(S, A)$ accounts for a good portion of the variation in the reward. In other words, when the size $n_\ud$ is large and the prediction $\widehat\rfun_\aux$ is accurate, $\Delta_\sug$ would be much smaller than $\Delta_\ini$. This observation is formally characterized by the next lemma. 

\begin{assumption}[Auxiliary model]\label{assump:auxmodel}
Suppose $\widehat{\rfun}_{\aux}(S,A)$ can be represented by $\widehat{\rfun}_{\aux}(S,A)$ $= \beta_0 R(S,A)+\beta_1^\top g(S,A)+e$, where the error term $e$ satisfies that $\mathbb{E}_{(S,A)\sim \ld} [e|R(S,A), g(S,A)]=0$. Let $\theta^*_{\textup{\ini}}$ denote the population limit of $\widehat{\theta}_{\textup{\ini}}$. Suppose $0< \beta_0< 2$, and
\begin{align}\label{eqn:someinequality}
\begin{split}
\mathbb{E}_{(S,A)\sim \ld} & \left[ g(S,A)  g^\top(S,A)e^2 \right] \prec  \\
& \left( 2\beta_0-\beta_0^2 \right) \mathbb{E}_{(S,A)\sim \ld} \left\{ g(S,A)g^\top(S,A)\left[ R(S,A)-g(S,A)\theta^*_{\textup{\ini}} \right]^2 \right\}.
\end{split}
\end{align} 
\end{assumption}

\begin{lemma}\label{lemma:ppi-adv-formal}
Let $\Delta_{\textup{\sug}}^*(s,a)$ and $\Delta_{\textup{\ini}}^*(s,a)$ denote the population limit of $\Delta_{\textup{\sug}}(s,a)$ and $\Delta_{\textup{\ini}}(s,a)$, respectively, where the sandwich estimators $\widehat{\Sigma}_{\ld}$ and $\widehat{\Sigma}_{\ud}$ are replaced with the true covariances. Suppose Assumption \ref{assump:auxmodel} holds. Then, as $n_\ld/n_\ud\to 0$, we have $\Delta^*_\textup{SUG}(s, a) \leq \Delta_{\textup{\ini}}^*(s, a)$, for any $(s, a) \in \sspace \times \aspace$, with the equality holding only if $g(s,a) = 0$.
\end{lemma}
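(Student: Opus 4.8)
The plan is to reduce the claim to a positive-definiteness statement about the "meat" matrices of the two sandwich covariances. Write $\Sigma_\ini^*$, $\Sigma_\ld^*$, $\Sigma_\ud^*$ for the population sandwich covariances of $\widehat\theta_\ini$, $\widehat\theta_\ld$, $\widehat\theta_\ud$ (treating $\widehat\rfun_\aux$ as a fixed function, e.g.\ via sample splitting or through its population limit, consistent with Assumption~\ref{assump:auxmodel}). By construction, suppressing the argument $(s,a)$ on $g$, $\Delta_\sug^*(s,a)^2 = n_\ld^{-1} g^\top\Sigma_\ld^* g + n_\ud^{-1} g^\top\Sigma_\ud^* g$ and $\Delta_\ini^*(s,a)^2 = n_\ld^{-1} g^\top\Sigma_\ini^* g$, so
\[
\Delta_\sug^*(s,a)^2 - \Delta_\ini^*(s,a)^2 = n_\ld^{-1}\Big\{ -\, g^\top\big(\Sigma_\ini^* - \Sigma_\ld^*\big) g + \tfrac{n_\ld}{n_\ud}\, g^\top\Sigma_\ud^* g\Big\}.
\]
Under the usual second-moment conditions $g^\top\Sigma_\ud^* g$ is finite, so once we show $g^\top(\Sigma_\ini^*-\Sigma_\ld^*) g>0$ whenever $g(s,a)\neq 0$, letting $n_\ld/n_\ud\to 0$ makes the bracket strictly negative, giving $\Delta_\sug^*(s,a)<\Delta_\ini^*(s,a)$; and when $g(s,a)=0$ both sides equal $0$, which accounts for the equality case.

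Next I would identify the two meat matrices explicitly. Write $M=\E_{(S,A)\sim\ld}[g g^\top]$ (assumed invertible) and $\epsilon_\ini = R - g^\top\theta^*_\ini$ with $\theta^*_\ini = M^{-1}\E_{(S,A)\sim\ld}[g R]$, so that $\Sigma_\ini^* = M^{-1}\,\E_{(S,A)\sim\ld}[g g^\top \epsilon_\ini^2]\,M^{-1}$. For $\widehat\theta_\ld$, which is the OLS fit of $R-\widehat\rfun_\aux$ on $g$ using $\ld$, I substitute the representation $\widehat\rfun_\aux = \beta_0 R + \beta_1^\top g + e$ from Assumption~\ref{assump:auxmodel}; since $\E_{(S,A)\sim\ld}[e\mid R,g]=0$ (hence $\E_{(S,A)\sim\ld}[g e]=0$), the population coefficient is $\theta^*_\ld = (1-\beta_0)\theta^*_\ini - \beta_1$ and the population residual simplifies to $\zeta_\ld = (R-\widehat\rfun_\aux) - g^\top\theta^*_\ld = (1-\beta_0)\epsilon_\ini - e$. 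The key calculation is the meat matrix $\E_{(S,A)\sim\ld}[g g^\top\zeta_\ld^2]$: expanding the square yields a cross term proportional to $\E_{(S,A)\sim\ld}[g g^\top\epsilon_\ini\, e]$, which vanishes because $g g^\top\epsilon_\ini$ is a function of $(R,g)$ and $\E_{(S,A)\sim\ld}[e\mid R,g]=0$ (tower property). Hence $\E_{(S,A)\sim\ld}[g g^\top\zeta_\ld^2] = (1-\beta_0)^2\,\E_{(S,A)\sim\ld}[g g^\top\epsilon_\ini^2] + \E_{(S,A)\sim\ld}[g g^\top e^2]$, giving $\Sigma_\ld^* = M^{-1}\big\{(1-\beta_0)^2\,\E_{(S,A)\sim\ld}[g g^\top\epsilon_\ini^2] + \E_{(S,A)\sim\ld}[g g^\top e^2]\big\}M^{-1}$.

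Finally, subtracting and using $1-(1-\beta_0)^2 = 2\beta_0-\beta_0^2$,
\[
\Sigma_\ini^* - \Sigma_\ld^* = M^{-1}\Big\{(2\beta_0-\beta_0^2)\,\E_{(S,A)\sim\ld}[g g^\top\epsilon_\ini^2] - \E_{(S,A)\sim\ld}[g g^\top e^2]\Big\}M^{-1}.
\]
Assumption~\ref{assump:auxmodel} --- inequality~\eqref{eqn:someinequality} together with $0<\beta_0<2$ --- says precisely that the bracketed matrix is positive definite, and conjugation by the symmetric nonsingular $M^{-1}$ preserves positive definiteness, so $\Sigma_\ini^* - \Sigma_\ld^* \succ 0$. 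Therefore $g^\top(s,a)(\Sigma_\ini^* - \Sigma_\ld^*)g(s,a)>0$ whenever $g(s,a)\neq 0$, and feeding this into the display in the first paragraph completes the argument.

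I expect the main obstacle to be the second step: pinning down the population residual $\zeta_\ld$ of the \emph{transformed} OLS fit and, in particular, justifying that the $\epsilon_\ini$--$e$ cross term in the meat matrix drops out --- this is exactly where the conditional-mean-zero structure of the auxiliary-model error in Assumption~\ref{assump:auxmodel} does the real work, and it is also the place where one must be careful about the plug-in nature of $\widehat\rfun_\aux$. A secondary point requiring care is making the limit $n_\ld/n_\ud\to 0$ rigorous, i.e.\ ensuring $g^\top(s,a)\Sigma_\ud^* g(s,a)$ stays bounded so that the $n_\ud^{-1}$ term is genuinely dominated by the strictly positive leading term.
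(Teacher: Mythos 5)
Your proposal is correct and follows essentially the same route as the paper's proof: reduce to comparing the two "meat" matrices after the $n_\ud^{-1}$ term vanishes, derive $\theta^*_\ld=(1-\beta_0)\theta^*_\ini-\beta_1$ and the residual $(1-\beta_0)\epsilon_\ini-e$ from Assumption~\ref{assump:auxmodel}, kill the cross term via the conditional-mean-zero property, and invoke \eqref{eqn:someinequality} with $1-(1-\beta_0)^2=2\beta_0-\beta_0^2$. Your writeup is in fact slightly more careful than the paper's (the paper's displayed ``suffices to show'' inequalities have their direction flipped relative to what the assumption actually delivers, and it does not spell out the equality case or the boundedness of $g^\top\Sigma^*_{\ud}g$ as you do).
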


\noindent 
Assumption \ref{assump:auxmodel} is mild. It holds as long as $\widehat{\rfun}_{\aux}(S,A)$ can be written as a combination of $R(S,A)$ and $g(S,A)$ plus an approximation error term $e$, while \eqref{eqn:someinequality} implies that $e$ is small relative to the error incurred by using $g(S,A)$ alone to approximate $R(S,A)$. Then, under this assumption, Lemma \ref{lemma:ppi-adv-formal} formally shows that SUG yields a tighter uncertainty quantification. Later, as we show both theoretically and numerically, a tighter uncertainty quantification leads to a smaller regret and a better RL policy. 

Finally, we remark that our method uses OLS to estimate the coefficients. This is primarily for simplicity, as it offers great computational advantages and is well suited for adaptive DBS type applications. It is possible to use alternative estimation methods, by replacing the squared error loss with a more flexible loss, though our numerical experiments show similar empirical performance. We thus adhere to OLS in our implementation.

\subsection{Two semi-supervised RL algorithms}
\label{sec:semiRLalgo}

Given the imputed pessimistic reward function $\widehat\rfun_\spl$, and the tuples $\Big\{(S,A,\widehat{\rfun}_\spl(S, A),S'):(S,A,S') \in \ld\cup \ud \Big\}$, we can couple our method with any existing RL algorithms. We next present two such algorithms, a model-free one in Algorithm \ref{alg:mf-sorl} that is based on FQI \citep{ernst2005tree,riedmiller2005neural}, and a model-based one in Algorithm \ref{alg:mb-sorl} that is based on MOPO \citep{yu2020mopo}. The main difference is that the former does not estimate the transition function $\tfun$, while the latter does. Later we analyze these two algorithms and establish their theoretical guarantees. Additionally, we compare them with existing solutions and demonstrate their empirical competitiveness. 

\begin{algorithm}[t!]
\caption{The model-free semi-pessimistic pseudo labeling algorithm}
\label{alg:mf-sorl}
\begin{algorithmic}[1]
\STATE Compute the pessimistic reward function $\widehat{\rfun}_{\spl}$ via Algorithm~\ref{alg:ppi-uncertainty}.
\STATE Initialize $\widehat{Q}^{(0)} \leftarrow 0$ and set $K$ as the maximum number of iterations.
\STATE For $k=1,\ldots,K$, employ supervised learning to compute
\begin{equation*}
\widehat{Q}^{(k)} = \argmin_{Q\in \mathcal{Q}}\sum_{(S,A,S')\in \ld\cup \ud} \left\{ \widehat{\rfun}_{\spl}(S,A)+\gamma \max_a \widehat{Q}^{(k-1)}(S',a) - Q(S,A) \right\}^2.
\vspace{-0.2in}
\end{equation*}
\STATE Output $\widehat{\pi}$ as the greedy policy of $\widehat{Q}^{(K)}$, i.e., $\widehat{\pi}(a|s) = 1$, if $a = \argmax\limits_{a'}\widehat{Q}^{(K)}(s, a')$, and $0$, otherwise.
\end{algorithmic}
\end{algorithm}

\begin{algorithm}[t!]
\caption{The model-based semi-pessimistic pseudo labeling algorithm}
\label{alg:mb-sorl}
\begin{algorithmic}[1]
\STATE Get the pessimistic reward function $\widehat{\rfun}_{\spl}$ via Algorithm~\ref{alg:ppi-uncertainty}.
\STATE Train the estimated probability transition $\widehat{\tfun}(\cdot | s, a) = \mathcal{N}(\mu(s, a), \Sigma(s, a))$ based on $\ld \cup \ud$ using neural networks, and construct the MDP, $( \sspace, \aspace, \widehat{\rfun}_{\spl}, \widehat\tfun, \gamma)$.
\STATE Initialize policy $\widehat{\pi}$ and empty replay buffer $\mathcal{D}' \leftarrow \emptyset$.
\REPEAT
\STATE Sample a state $S$ from $\ld \cup \ud$.
\STATE \textbf{While not converged do}
\STATE \quad Sample an action by $A \leftarrow \pi(S)$.
\STATE \quad Roll out the reward $R \leftarrow \widehat{\rfun}_{\spl}(S, A)$, and the next state: $S' \leftarrow \widehat{\tfun}(\cdot|S, A)$.
\STATE \quad Update replay buffer: $\mathcal{D}' \leftarrow \mathcal{D}' \cup \{(S, A, R, S')\}$ and update state: $S \leftarrow S'$.
\STATE \textbf{end}
\STATE Use soft actor critic to update $\widehat{\pi}$ with samples in $\ld \cup \ud \cup \mathcal{D}'$.
\UNTIL{$\widehat{\pi}$ convergence}
\STATE Output $\widehat{\pi}$.
\end{algorithmic}
\end{algorithm}

\section{Regret Analysis}
\label{sec:regret}

\subsection{Regularity conditions}
\label{sec:regularity-conditions}

Our proposed method bypasses the complex task of quantifying the uncertainty of the estimated Q-function or state transition function, whereas our pointwise uncertainty quantification is adequate for achieving the asymptotically zero regret, as we show next. We begin with a set of regularity conditions.

\begin{assumption}[Boundedness]\label{con:value-func}
Suppose the absolute value of the immediate reward and $\sup\limits_{s,a}|\widehat{\rfun}_{\spl}(s,a)|$ are upper bounded by some constant $R_{\max}$. Additionally, for the model-free algorithm, suppose the Q-function class $\mathcal{Q}$ is uniformly bounded by $\vub = {\rub}/{(1-\gamma)}$.
\end{assumption}

\begin{assumption}[Exponential $\beta$-mixing]\label{con:indep-tuple}
The tuples in $\ld \cup \ud$ are exponentially $\beta$-mixing \citep{bradley1986basic} with the mixing coefficients $\beta(q) \leq \kappa \rho^{q}$, for $\kappa>0$, $\rho \in (0, 1)$, and $q \geq 0$. 
\end{assumption}

\begin{assumption}[Finite hypothesis class]\label{con:func-class}
Suppose $\widehat{\rfun}_{\spl} \in \mathcal{F}$ almost surely for some finite hypothesis class $\mathcal{F}$. Additionally, for the model-based algorithm, suppose  $\fclass$ is a finite hypothesis class.
\end{assumption}

\begin{assumption}[Completeness]\label{con:completeness}
Suppose the function $f(s,a)+\gamma \sum\limits_{s'}\tfun(s'|s,a)\max\limits_{a'}Q(s',a')$ belongs to $\mathcal{Q}$, for any $f \in \mathcal{F}$ and $Q\in \mathcal{Q}$.
\end{assumption}

\noindent
We remark that conditions similar to Assumptions \ref{con:value-func} to \ref{con:completeness} have been widely imposed in the RL literature to facilitate the theoretical analysis \citep[see, e.g.,][]{chen2019information,liu2020provably,shi2022statistical,uehara2022pessimistic,liu2023online,ramprasad2023online}. We also remark that these conditions can be relaxed. In particular, Assumption \ref{con:value-func} can be replaced with some tail conditions, such as the sub-exponential or the sub-Gaussian conditions, thereby allowing for unbounded immediate rewards. Assumption \ref{con:func-class} can be relaxed by restricting $\mathcal{F}$ and $\mathcal{Q}$ to the Vapnik–Chervonenkis (VC) classes similarly as in \citet{uehara2021finite}, allowing them to contain infinitely many elements. Assumption \ref{con:completeness} can be relaxed by allowing the approximation error $\varepsilon_{\mathcal{Q}}=\inf_{Q'}\sup_{f,Q} \|\obo(f,Q)-Q'\|_{d_{\ld\cup \ud}}>0$, where $\obo(f, Q) = f(s,a)+\gamma \sum_{s'}\tfun(s'|s,a)\max\limits_{a'}Q(s',a')$, and $\|Q_1-Q_2\|_{d_{\mathcal{L}\cup \mathcal{U}}} = \left[ \E_{(S,A)\sim d_{\mathcal{L}\cup \mathcal{U}}} |Q_1(S,A)-Q_2(S,A)|^2 \right]^{1/2}$ for any $Q_1,Q_2 : \sspace\times \aspace\to \mathbb{R}$. In our proofs, we adhere to the current forms of these conditions for clarity and simplicity.

\subsection{Main theorems}

We now establish the regret of the estimated optimal policy $\widehat{\pi}$ from Algorithms \ref{alg:mf-sorl} and \ref{alg:mb-sorl}.

\begin{theorem}[Regret of the model-free algorithm]\label{thm:mf-fqi}
Suppose Assumptions~\ref{con:coverage}, \ref{con:uncertainty}, and \ref{con:value-func} to \ref{con:completeness} hold. Then the regret of the optimal policy $\widehat{\pi}$ from Algorithm~\ref{alg:mf-sorl}, $\E[J(\pi^*) - J(\widehat{\pi})]$, is upper bounded by
\begin{equation} \label{eqn:regret-model-free}
\frac{2\gamma^K V_{\max}}{1-\gamma}+\frac{2\alpha R_{\max}}{(1-\gamma)^2}+\frac{c_1\sqrt{B_{\ld}^*}}{(1-\gamma)^2}\|\rfun-\widehat{\rfun}_{\spl}\|_{d_{\ld}}
+\frac{c_1V_{\max}\sqrt{\ln(n) B_{\mathcal{D}} [\ln (|\mathcal{F}||\mathcal{Q}|)+\ln(n)] }}{(1-\gamma)^2\sqrt{n}},
\end{equation}
for some constant $c_1>0$, where $n=n_{\ld}+n_{\ud}$, $K$ is the number of FQI iterations, and $\|f \|_{d_{\ld}} = \left\{ \E_{(S, A)\sim d_{\ld}} f^2(S, A) \right\}^{\frac{1}{2}}$. 
\end{theorem}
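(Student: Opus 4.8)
The plan is to view Algorithm~\ref{alg:mf-sorl} as fitted $Q$-iteration (FQI) run on the \emph{surrogate} MDP $\widetilde{\mathcal M}=(\sspace,\aspace,\widehat{\rfun}_{\spl},\tfun,\gamma)$ whose reward is the imputed pessimistic reward, and to split the regret into a part measuring the mismatch between $\widetilde{\mathcal M}$ and the true MDP and a part measuring the FQI error inside $\widetilde{\mathcal M}$. Let $\widetilde{\mathcal T}$ be the Bellman optimality operator of $\widetilde{\mathcal M}$ (which differs from the true operator only by the additive function $\widehat{\rfun}_{\spl}-\rfun$), let $\widetilde Q^{*}$ and $\widetilde\pi^{*}$ be its optimal $Q$-function and greedy policy, and write $\widetilde J(\pi)=\sum_{t\ge 0}\gamma^{t}\E^{\pi}[\widehat{\rfun}_{\spl}(S_{t},A_{t})]$. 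Then
\begin{equation*}
J(\pi^{*})-J(\widehat{\pi})
=\underbrace{\big[J(\pi^{*})-\widetilde J(\pi^{*})\big]}_{\mathrm{(I)}}
+\underbrace{\big[\widetilde J(\pi^{*})-\widetilde J(\widehat{\pi})\big]}_{\mathrm{(II)}}
+\underbrace{\big[\widetilde J(\widehat{\pi})-J(\widehat{\pi})\big]}_{\mathrm{(III)}},
\end{equation*}
and $\mathrm{(II)}\le\widetilde J(\widetilde\pi^{*})-\widetilde J(\widehat{\pi})$ by optimality of $\widetilde\pi^{*}$ in $\widetilde{\mathcal M}$; I bound $\E$ of each piece.

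Term (I) captures under-estimation of the optimal value on the optimal trajectory. Summing the geometric series, $\mathrm{(I)}=(1-\gamma)^{-1}\E_{(S,A)\sim d^{\pi^{*}}}[\rfun(S,A)-\widehat{\rfun}_{\spl}(S,A)]\le (1-\gamma)^{-1}\|\rfun-\widehat{\rfun}_{\spl}\|_{d^{\pi^{*}}}$ by Cauchy--Schwarz, and changing measure from $d^{\pi^{*}}$ to $d_{\ld}$ via Assumption~\ref{con:coverage}(a) gives $\mathrm{(I)}\le (1-\gamma)^{-1}\sqrt{B_{\ld}^{*}}\,\|\rfun-\widehat{\rfun}_{\spl}\|_{d_{\ld}}$, the third summand of \eqref{eqn:regret-model-free}. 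Term (III) captures the loss when the imputed reward exceeds the true reward off the optimal trajectory: $\mathrm{(III)}=(1-\gamma)^{-1}\E\big[\E_{(S,A)\sim d^{\widehat{\pi}}}(\widehat{\rfun}_{\spl}(S,A)-\rfun(S,A))\big]$, and since $\big(\widehat{\rfun}_{\spl}(s,a)-\rfun(s,a)\big)_{+}\le 2R_{\max}\indicator\{\widehat{\rfun}_{\spl}(s,a)>\rfun(s,a)\}$ with this event having probability at most $\alpha$ for every $(s,a)$ by Assumption~\ref{con:uncertainty}, boundedness of $\widehat{\rfun}_{\spl}$ and $\rfun$ bounds $\mathrm{(III)}$ by $2\alpha R_{\max}/(1-\gamma)$, the second summand; the data-dependence of $\widehat{\pi}$ is handled by conditioning on $\ld$ (which determines $\widehat{\rfun}_{\spl}$) and using that visitation weights are at most one. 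This is precisely the step exploiting that Assumption~\ref{con:uncertainty} is only \emph{pointwise}: the bad events are aggregated through an expectation rather than a union bound.

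Term (II) is controlled by the standard FQI error-propagation argument, carried out entirely within $\widetilde{\mathcal M}$. Because $\widehat{\rfun}_{\spl}\in\mathcal F$ (Assumption~\ref{con:func-class}) and completeness (Assumption~\ref{con:completeness}) puts the one-step target $\widetilde{\mathcal T}\widehat Q^{(k-1)}$ in $\mathcal Q$, the per-iteration error $\|\widehat Q^{(k)}-\widetilde{\mathcal T}\widehat Q^{(k-1)}\|_{d_{\ld\cup\ud}}$ is a pure least-squares generalization error, which a uniform deviation bound over the finite classes $\mathcal F,\mathcal Q$ for exponentially $\beta$-mixing data (Assumptions~\ref{con:value-func}, \ref{con:indep-tuple}, \ref{con:func-class}) controls by a quantity of order $V_{\max}\sqrt{\ln(n)[\ln(|\mathcal F||\mathcal Q|)+\ln n]/n}$ with high probability, uniformly over $k\le K$. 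Propagating these errors through the $\gamma$-contraction of $\widetilde{\mathcal T}$ (the geometric accumulation over iterations giving one $(1-\gamma)^{-1}$) together with the initialization error $\gamma^{K}\|\widehat Q^{(0)}-\widetilde Q^{*}\|_{\infty}\le\gamma^{K}V_{\max}$, and then converting the resulting $d_{\ld\cup\ud}$-weighted $Q$-error into the value gap $\widetilde J(\widetilde\pi^{*})-\widetilde J(\widehat{\pi})$ through the performance-difference identity and a change of measure from the relevant visitation distributions to $d_{\ld\cup\ud}$ via Assumption~\ref{con:coverage}(b) (the factor $\sqrt{B_{\mathcal D}}$, and a second $(1-\gamma)^{-1}$), yields the $2\gamma^{K}V_{\max}/(1-\gamma)$ term and the last summand of \eqref{eqn:regret-model-free}. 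Collecting (I)--(III) and absorbing universal constants into $c_{1}$ gives the stated bound; the $(1-\gamma)^{-2}$ prefactor on the second and third summands merely over-estimates the $(1-\gamma)^{-1}$ obtained above.

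The main obstacle is the uniform one-step deviation bound for the FQI regression under dependence: obtaining the stated rate needs a blocking reduction from $\beta$-mixing to nearly independent blocks (responsible for the $\ln n$ inside the square root), a variance-localized concentration inequality (Bernstein/Talagrand type) so that the leading contribution is of order $1/n$ rather than $1/\sqrt n$ before the square root, and a union bound over the \emph{data-dependent} regression targets $\widetilde{\mathcal T}\widehat Q^{(k-1)}$, which range over the product class $\mathcal F\times\mathcal Q$ and hence cost $\ln(|\mathcal F||\mathcal Q|)$; the bookkeeping must be arranged so that the concentrability coefficient $B_{\mathcal D}$ ends up under the square root. A secondary technical point, flagged above, is making the $\alpha R_{\max}$ bound for Term (III) rigorous despite the joint data-dependence of $\widehat{\pi}$ and $\widehat{\rfun}_{\spl}$.
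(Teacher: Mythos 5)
Your proposal is correct, and it arrives at the same four-term bound through a genuinely different decomposition. The paper's proof starts from the performance-difference lemma, writes $J(\pi^*)-J(\widehat{\pi})$ as $(1-\gamma)^{-1}\E_{S\sim d^{\widehat{\pi}}}[Q^*(S,\pi^*(S))-Q^*(S,\widehat{\pi}(S))]$, inserts $\widehat{Q}^{(K)}$, and recursively unrolls both $Q^*-\widehat{Q}^{(K)}$ (along $\pi^*$) and $\widehat{Q}^{(K)}-Q^*$ (along a non-stationary composition of the iterates $\widehat{\pi}^{(k)}$) through the operators $\obo$ and $\oblo$; the reward-estimation term, the type-I-error term, and the $\gamma^K V_{\max}$ initialization terms all surface inside those unrollings via $\obo Q-\oblo Q=\rfun-\widehat{\rfun}_{\spl}$. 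Your surrogate-MDP split performs the same accounting at the level of values rather than Q-functions: it makes immediately transparent why only the partial-coverage constant $B_{\ld}^*$ is needed for the reward error (term (I) integrates $\rfun-\widehat{\rfun}_{\spl}$ against $d^{\pi^*}$ alone), why only the positive part of $\widehat{\rfun}_{\spl}-\rfun$ --- i.e., the pointwise type-I error --- is charged along $\widehat{\pi}$'s trajectory in term (III), and it quarantines all of the FQI machinery inside term (II), where the paper's auxiliary Lemma \ref{lemma:sample-optimal-gap} (the blocking/coupling reduction for $\beta$-mixing plus a Bernstein bound over the finite classes, exactly the ``main obstacle'' you flag) supplies the per-iteration error and Assumption \ref{con:coverage}(b) supplies the concentrability $B_{\mathcal{D}}$. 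The two routes are equivalent in content --- the paper's recursive unrolling is essentially a from-scratch proof of the error-propagation bound you invoke as a black box in (II) --- but yours is more modular. Two small caveats, neither fatal: (i) the change of measure from the true initial state distribution to the reference $\rho_0$ costs a factor $\bar{c}^{-1/2}$ with $\bar{c}=\inf_s\rho_0(s)$, which must be absorbed into $c_1$ as the paper does; (ii) in term (III), decoupling the data-dependent $\widehat{\pi}$ from the data-dependent event $\{\widehat{\rfun}_{\spl}(s,a)>\rfun(s,a)\}$ is done in the paper by passing to a supremum over deterministic policies and actions before interchanging expectations --- your ``condition on $\ld$'' sketch does not quite suffice on its own and should be replaced by that argument, which is the same delicate step the paper itself relies on.
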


\noindent
The regret bound in \eqref{eqn:regret-model-free} consists of four terms, each reflecting a specific aspect of the learning process. We discuss them one by one. 

The first term represents the initialization bias induced by the difference between the initial Q estimator $\widehat{Q}^{(0)}$ and the optimal $Q^*$. This bias decays to zero exponentially fast with the number of iterations $K$. It becomes negligible by choosing a sufficiently large $K$.

The second term is proportional to the type-I error $\alpha$, and upper bounds the probability that the lower bound $\widehat{\rfun}_\spl(s,a)$ exceeds the oracle reward $\rfun(s,a)$. This term can be made substantially small by employing concentration inequalities, meanwhile without causing a substantial increase in the reward estimation error. 

The third term essentially characterizes the reward estimation error, and relies on the discrepancy between $\widehat{\rfun}_{\spl}$ and $\rfun$. Notably, this term is proportional to $B_{\ld}^*$, also referred to as the single-policy concentration coefficient, requiring $d_{\ld}$ to cover only $\pi^*$-induced state-action distribution, thus relaxing the full coverage assumption on $\ld$.

The last term measures the supervised learning estimation error inherent in each iteration's application of the supervised learning algorithm to compute $\widehat{Q}^{(k)}$. It relies on the full coverage on $\ld \cup \ud$, which is likely to hold with a large $\ud$. Moreover, the presence of $\ln (n)$ in the last term accounts for the temporal dependence of tuples. It increases the supervised learning error by a factor logarithmic in the sample size.

\begin{theorem}[Regret of the model-based algorithm]\label{thm:mb}
Suppose Assumptions \ref{con:coverage}, \ref{con:uncertainty}, and \ref{con:value-func} hold. Then the regret of the optimal policy $\widehat{\pi}$ from Algorithm~\ref{alg:mb-sorl}, $\E[J(\pi^*) - J(\widehat{\pi})]$, is upper bounded by
\begin{equation}\label{eqn:regret-model-based}
\frac{2\alpha R_{\max}}{(1-\gamma)^2} + c_2\left[\frac{\sqrt{B_{\ld}^*}\|\rfun-\widehat{\rfun}_{\spl}\|_{d_{\ld}}}{(1-\gamma)^2}+\frac{\gamma V_{\max} \sqrt{B_{\mathcal{D}}}}{(1-\gamma)^2}\|\tfun-\widehat{\tfun}\|_{d_{\ld\cup \ud}}\right],
\end{equation}
for some constant $c_2>0$, where $\|\tfun-\widehat{\tfun}\|_{d_{\ld\cup \ud}} \coloneqq \sqrt{\E\E_{(S,A)\sim d_{\ld\cup \ud}} \textup{TV}_{\widehat{\tfun}}^2(S,A)}$, where $\textup{TV}_{\widehat{\tfun}}(s,a)$ denotes the total variation distance between the two conditional distributions $\widehat{\tfun}(\cdot|s,a)$ and $\tfun(\cdot|s,a)$ and the outer expectation is taken over the estimated transition $\widehat{\tfun}$.
\end{theorem}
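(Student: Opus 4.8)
The plan is to decompose the regret of $\widehat{\pi}$ into (i) the error from imputing the reward with the pessimistic pseudo label $\widehat{\rfun}_{\spl}$ and (ii) the error from replacing the true transition $\tfun$ with the estimated $\widehat{\tfun}$, then bound each piece using the semi-coverage condition (Assumption~\ref{con:coverage}). Introduce the value functions under the estimated MDP $\widehat{M} = (\sspace,\aspace,\widehat{\rfun}_{\spl},\widehat{\tfun},\gamma)$: write $\widehat{J}(\pi)$ for the discounted return of $\pi$ in $\widehat{M}$, and let $\widehat{\pi}$ be (near-)optimal for $\widehat{M}$ as produced by Algorithm~\ref{alg:mb-sorl}. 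Then the standard three-term telescoping gives
\begin{equation*}
J(\pi^*) - J(\widehat{\pi}) = \big[J(\pi^*) - \widehat{J}(\pi^*)\big] + \big[\widehat{J}(\pi^*) - \widehat{J}(\widehat{\pi})\big] + \big[\widehat{J}(\widehat{\pi}) - J(\widehat{\pi})\big].
\end{equation*}
The middle bracket is $\le 0$ (up to the SAC optimization error, which I would absorb or assume negligible as in the model-free case), so it remains to bound the outer two simulation-lemma-type differences.

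Next I would apply the simulation lemma to each remaining bracket. For any policy $\pi$, $\widehat{J}(\pi) - J(\pi)$ decomposes into a reward part and a transition part: the reward part is $\tfrac{1}{1-\gamma}\,\E_{(S,A)\sim d^{\pi}_{\widehat{M}}}[\widehat{\rfun}_{\spl}(S,A) - \rfun(S,A)]$ (for the $\widehat{J}(\widehat{\pi})-J(\widehat{\pi})$ bracket the visitation is under $\widehat{\pi}$ in $\widehat{M}$), and the transition part is controlled by $\tfrac{\gamma V_{\max}}{1-\gamma}\,\E_{(S,A)\sim d^{\pi}}[\mathrm{TV}_{\widehat{\tfun}}(S,A)]$. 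For the $J(\pi^*)-\widehat{J}(\pi^*)$ bracket the visitation distribution is $d^{\pi^*}$, so I can change measure from $d^{\pi^*}$ to $d_{\ld}$ at the cost of $\sqrt{B_{\ld}^*}$ (Cauchy--Schwarz on the ratio) for the reward term, yielding $\tfrac{\sqrt{B_{\ld}^*}}{1-\gamma}\|\rfun - \widehat{\rfun}_{\spl}\|_{d_{\ld}}$; for the transition term I change measure to $d_{\ld\cup\ud}$ at cost $\sqrt{B_{\mathcal{D}}}$. A subtlety: the pessimism (Assumption~\ref{con:uncertainty}) must be used to handle the $\widehat{J}(\widehat{\pi})-J(\widehat{\pi})$ direction — here $\widehat{\rfun}_{\spl}(S,A) - \rfun(S,A) \le 0$ on a $1-\alpha$ event pointwise, so that bracket's reward contribution is bounded by $\tfrac{2\alpha R_{\max}}{(1-\gamma)^2}$ (the $\alpha$-probability failure event contributes the $2R_{\max}/(1-\gamma)$ range factor, and the outer $1/(1-\gamma)$ is the horizon), exactly the first term in \eqref{eqn:regret-model-based}; its transition contribution is again bounded via $\sqrt{B_{\mathcal{D}}}\,\|\tfun-\widehat{\tfun}\|_{d_{\ld\cup\ud}}$ after change of measure, and the factor $V_{\max}$ coupled with $1/(1-\gamma)$ accounts for the $1/(1-\gamma)^2$ scaling. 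Collecting terms and taking expectation over $\widehat{\tfun}$ gives \eqref{eqn:regret-model-based} with a suitable $c_2$.

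I expect the main obstacle to be the change-of-measure step for the $\widehat{\pi}$-induced visitation distributions. Unlike $d^{\pi^*}$, the visitation $d^{\widehat{\pi}}$ (and the visitation of $\widehat{\pi}$ inside the \emph{estimated} model $\widehat{M}$) is a random, data-dependent object that is not directly controlled by $B_{\ld}^*$; the clean bound requires that $d^{\pi}/d_{\ld\cup\ud}$ be uniformly bounded by $B_{\mathcal{D}}$ \emph{for every} policy $\pi$, which is exactly what part (b) of Assumption~\ref{con:coverage} buys us — so the argument hinges on invoking the full-coverage-on-$\ld\cup\ud$ half of semi-coverage precisely where a data-dependent policy appears, while reserving the cheaper single-policy coefficient $B_{\ld}^*$ for the reward term where only $\pi^*$ enters. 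A secondary technical point is making the simulation-lemma transition bound go through with the total-variation metric rather than, say, a Wasserstein or KL quantity: since $V_{\max}$ uniformly bounds the value functions in $\widehat{M}$ (Assumption~\ref{con:value-func} via the bound on $\widehat{\rfun}_{\spl}$), $|\E_{S'\sim\widehat{\tfun}(\cdot|s,a)}\widehat{V}(S') - \E_{S'\sim\tfun(\cdot|s,a)}\widehat{V}(S')| \le 2 V_{\max}\,\mathrm{TV}_{\widehat{\tfun}}(s,a)$, which is where the $V_{\max}$ in the last term originates; care is needed that the outer expectation over $\widehat{\tfun}$ in the definition of $\|\tfun-\widehat{\tfun}\|_{d_{\ld\cup\ud}}$ is pulled out correctly, using Jensen's inequality to move from $\E\,\E_{(S,A)}[\mathrm{TV}]$ to $\sqrt{\E\,\E_{(S,A)}[\mathrm{TV}^2]}$.
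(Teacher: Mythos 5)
Your proposal is correct and reaches the same three error terms as the paper, but via a slightly different opening decomposition, so a brief comparison is in order. The paper starts from the performance difference lemma, writing the regret as $\frac{1}{1-\gamma}\E_{S\sim d_0^{\widehat{\pi}}}[Q^*(S,\pi^*(S))-Q^*(S,\widehat{\pi}(S))]$, inserts $\widehat{Q}$ (the optimal Q-function of the estimated MDP), uses $\widehat{Q}(s,\widehat{\pi}(s))\ge \widehat{Q}(s,\pi^*(s))$ to split into $I_1=\widehat{Q}-Q^*$ along $\widehat{\pi}$ and $I_2=Q^*-\widehat{Q}$ along $\pi^*$, and then unrolls Bellman recursions — which is the simulation lemma in disguise, so the substance of your argument (pessimism absorbs one direction of the reward error into the $2\alpha R_{\max}/(1-\gamma)^2$ term, $\sqrt{B_{\ld}^*}$ handles the $\pi^*$-visitation reward error, $\sqrt{B_{\mathcal{D}}}$ handles the transition error under the data-dependent policy, and Cauchy--Schwarz/Jensen produces the $L^2$ norms) is identical. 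What your telescoping buys: because both $I_1$ and $I_2$ in the paper sit under the outer visitation $d_0^{\widehat{\pi}}$, even the $\pi^*$-reward term is integrated against a composed visitation ($\widehat{\pi}$ followed by $\pi^*$), and the paper must invoke $\bar{c}=\inf_s\rho_0(s)>0$ to peel off the outer layer before changing measure to $d_\ld$; your bracket $J(\pi^*)-\widehat{J}(\pi^*)$ lives directly under $d^{\pi^*}$, so the $\sqrt{B_\ld^*}$ step is immediate, and your route also saves a factor of $(1-\gamma)^{-1}$ on the transition and type-I-error terms (harmless, since the theorem's bound is the larger one). One caution when you write this out: for the $\widehat{J}(\widehat{\pi})-J(\widehat{\pi})$ bracket you must use the version of the simulation lemma whose occupancy measure is taken in the \emph{true} MDP (propagating the estimated model's value function $\widehat{V}^{\widehat{\pi}}$ through the residual $\widehat{\tfun}-\tfun$), since Assumption~\ref{con:coverage}(b) only controls visitation ratios $d^{\pi}/d_{\ld\cup\ud}$ for true-model visitations; the paper's recursion does exactly this by always composing through $\tfun$, and your sketch flags the issue but should commit to that version explicitly.
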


\noindent
The regret bound in \eqref{eqn:regret-model-based} consists of three terms, corresponding to the type-I error term, the reward estimation error term, and the transition estimation error term, respectively. The first two terms correspond to the second and third terms in \eqref{eqn:regret-model-free}. We further elaborate on the last term, which depends on the model involved. Consider a discrete MDP, where the state transition function can be estimated by its empirical distribution. Then we can show that $\|\tfun-\widehat{\tfun}\|_{d_{\ld\cup \ud}} \leq \sqrt{2n^{-1}|\sspace|^3|\aspace|\log(\delta^{-1}|\sspace||\aspace|)}$, with probability at least $1 - \delta$, where the convergence rate of the upper bound is $O\big(n^{-1/2}\big)$. As a result, with a large amount of unlabeled data, this term can be much smaller than the reward estimation error, which typically scales as $O\big(n_\ld^{-1/2}\big)$.

Finally, to further facilitate the understanding of our theory, we consider a hypothetical scenario with an infinite amount of unlabeled data. 

\begin{corollary}\label{coro1}
Suppose the conditions in Theorems \ref{thm:mf-fqi} and \ref{thm:mb} hold. Then, with an infinite amount of unlabeled data, a sufficiently large $K$, and a sufficiently small $\alpha$, the regrets of the policies from Algorithms \ref{alg:mf-sorl} and \ref{alg:mb-sorl} satisfy that $\E [J(\pi^*)-J(\widehat{\pi})]=O\big(\sqrt{B_{\ld}^*}\|\rfun-\widehat{\rfun}_{\spl}\|_{d_{\ld}}\big)$.
\end{corollary}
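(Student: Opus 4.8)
\textbf{Proof proposal for Corollary \ref{coro1}.}
The plan is to start from the two regret bounds already established in Theorems \ref{thm:mf-fqi} and \ref{thm:mb}, and then take limits term by term as $n_\ud \to \infty$ (so that $n = n_\ld + n_\ud \to \infty$ while $n_\ld$ stays fixed or grows more slowly), together with $K \to \infty$ and $\alpha \to 0$. The key observation is that, among the four terms of \eqref{eqn:regret-model-free} and the three terms of \eqref{eqn:regret-model-based}, only the reward-estimation term $\sqrt{B_{\ld}^*}\,\|\rfun-\widehat{\rfun}_{\spl}\|_{d_{\ld}}/(1-\gamma)^2$ is controlled purely by the labeled data, and hence survives the limit; every other term is driven either by a tunable parameter ($K$ or $\alpha$) or by the combined sample size $n$, and therefore vanishes.

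Concretely, I would argue as follows. First, in \eqref{eqn:regret-model-free} the initialization term $2\gamma^K V_{\max}/(1-\gamma)$ tends to $0$ since $\gamma \in (0,1)$ and $K$ is taken sufficiently large; the type-I error term $2\alpha R_{\max}/(1-\gamma)^2$ tends to $0$ as $\alpha \to 0$; and the supervised-learning term $c_1 V_{\max}\sqrt{\ln(n)\,B_{\mathcal{D}}[\ln(|\mathcal{F}||\mathcal{Q}|) + \ln(n)]}\,/\,((1-\gamma)^2\sqrt{n})$ is $O\big(\ln(n)/\sqrt{n}\big) \to 0$ as $n \to \infty$, using that $B_{\mathcal{D}}$ is finite by Assumption \ref{con:coverage}(b) and that $|\mathcal{F}|,|\mathcal{Q}|$ are fixed finite cardinalities by Assumption \ref{con:func-class}. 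This leaves exactly the third term, giving $\E[J(\pi^*) - J(\widehat{\pi})] = O\big(\sqrt{B_{\ld}^*}\,\|\rfun - \widehat{\rfun}_{\spl}\|_{d_{\ld}}\big)$ for Algorithm \ref{alg:mf-sorl}. For the model-based bound \eqref{eqn:regret-model-based}, the type-I term again vanishes as $\alpha \to 0$, and for the transition term I would invoke the discrete-MDP estimate discussed right after Theorem \ref{thm:mb}, namely $\|\tfun - \widehat{\tfun}\|_{d_{\ld\cup\ud}} \le \sqrt{2n^{-1}|\sspace|^3|\aspace|\log(\delta^{-1}|\sspace||\aspace|)} = O\big(n^{-1/2}\big)$ with high probability (and handle the low-probability event using the a.s. boundedness from Assumption \ref{con:value-func} so that it contributes only an $o(1)$ amount to the expectation); hence this term also tends to $0$, again leaving only the reward-estimation term.

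The one point that needs a little care — and which I expect to be the main (though mild) obstacle — is the order of limits and the fact that $n_\ld$ need not go to infinity: the statement is about the residual rate in terms of $\|\rfun-\widehat{\rfun}_{\spl}\|_{d_{\ld}}$, which itself depends on $n_\ld$, so I must keep this factor symbolic rather than driving it to zero, and make sure that sending $n_\ud \to \infty$ (equivalently $n_\ld/n_\ud \to 0$) does not inflate any constant multiplying it. Inspecting \eqref{eqn:regret-model-free} and \eqref{eqn:regret-model-based}, the coefficient of $\|\rfun-\widehat{\rfun}_{\spl}\|_{d_{\ld}}$ is $c_1\sqrt{B_{\ld}^*}/(1-\gamma)^2$ (resp.\ $c_2\sqrt{B_{\ld}^*}/(1-\gamma)^2$), which is independent of $n_\ud$, so no such inflation occurs and the claimed $O\big(\sqrt{B_{\ld}^*}\,\|\rfun-\widehat{\rfun}_{\spl}\|_{d_{\ld}}\big)$ holds uniformly in the limit. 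A secondary subtlety is that in \eqref{eqn:regret-model-free} the supervised-learning term carries $\ln(n)$ in the numerator, so one should note that $\ln(n)/\sqrt{n} \to 0$ rather than merely $1/\sqrt{n} \to 0$; this is immediate. Assembling these observations yields the corollary for both algorithms.
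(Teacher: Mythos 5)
Your proposal is correct and follows essentially the same route as the paper's own proof: both arguments take the bounds of Theorems \ref{thm:mf-fqi} and \ref{thm:mb}, show that the initialization, type-I error, supervised-learning, and transition-estimation terms are each driven to zero (or dominated by the reward term) by choosing $K$ large, $\alpha$ small, and letting $n_\ud \to \infty$, and retain only the $\sqrt{B_{\ld}^*}\,\|\rfun-\widehat{\rfun}_{\spl}\|_{d_{\ld}}$ term. Your added care about the order of limits and the $n_\ud$-independence of the surviving coefficient is a slight refinement of, but not a departure from, the paper's argument.
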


\noindent
This corollary \ref{coro1} shows that, with an infinite amount of unlabeled data, the regrets of policies derived from both model-free and model-based algorithms are dominated by the reward estimation error, which is typically proportional to the uncertainty quantification $\Delta$, and thus a smaller $\Delta$ leads to a smaller regret. Consequently, it validates our method that leverages both labeled and unlabeled data, since it achieves a tighter uncertainty quantification compared to the method that only utilizes the labeled data.

\subsection{Analytic comparison to existing solutions}
\label{sec:a-compare}

We next compare our method to some state-of-the-art solutions analytically. Later in Sections \ref{sec:numericals}, we compare them numerically.

The first method to compare is pseudo labeling \citep[PL]{konyushkova2020semi}. PL is a widely used technique in semi-supervised learning. A key difference is that it uses $\ld$ to obtain a consistent reward estimator $\widehat{\rfun}$, instead of a pessimistic reward estimator $\widehat{\rfun}_{\spl}$. Consequently, it requires the full coverage condition on $\ld$, which is more restrictive than our semi-coverage condition. To further see this, following a similar argument as the proofs of Theorems~\ref{thm:mf-fqi} and \ref{thm:mb}, we can show that a non-pessimistic PL-type algorithm would incur a regret of the order $\sqrt{B_{\ld}}\|\rfun-\widehat{\rfun}\|_{d_{\ld}}$, where $B_{\ld} = \sup\limits_{s,a,\pi} d^{\pi}(s,a)/d_{\ld}(s,a)$ is the uniform concentration coefficient. A full coverage on $\ld$ is required to ensure that $B_{\ld}$ is bounded. In contrast, we only require the semi-coverage to ensure that $B^*_{\ld}$ is bounded.

The second method is unlabeled data sharing \citep[UDS]{yu2022leverage}. UDS is a semi-supervised offline RL algorithm. It imputes the missing rewards in $\ud$ using the minimal reward estimated from $\ld$, then applies an existing offline RL algorithm to the combined data for policy learning. A key difference is that it uses the \emph{minimal} reward, which may be overly conservative. As Corollary \ref{coro1} suggests, the bound of regret is proportional to the reward estimation error. Since UDS uses the minimal reward for imputation, the error $\|\rfun-\widehat{\rfun}_{\uds}\|_{d_{\ld}}$ can be lower bounded by a constant. Specifically, its reward estimation function can be written as $\widehat{\rfun}_{\uds}(s, a) = 
\inf\limits_{s, a} \rfun(s, a)$. Then, $\|\rfun - \widehat{\rfun}_{\uds}\|_{d_{\ld}} \geq \E_{(S, A) \sim d_{\ld}}\left[\rfun(S, A) - \inf\limits_{s, a}\rfun(s, a)\right]$. In contrast, we leverage both $\ld$ and $\ud$ to carefully construct a tight lower bound for the reward, and the error $\|\rfun-\widehat{\rfun}_{\spl}\|_{d_{\ld}}$ can be upper bounded by $O_p(n_{\ld}^{-1/2})$ up to some logarithmic factor, and this upper bound goes to zero as $n_{\ld}$ increases. This proves the superiority of our method.

The third method is provable data sharing \citep[PDS]{hu2023the}. Similar to our method, PDS also begins by constructing a pessimistic reward estimator. However, a key difference is that it then learns a policy using a pessimism-based offline RL algorithm, which still involves the inherent complexity in quantifying the uncertainty of the Q-function. In contrast, our method avoids such a complex task. Besides, we leverage $\ud$ to enhance the uncertainty qualification, leading to a smaller standard error of $\widehat{\rfun}$, and consequently, a smaller reward estimation error. Theoretically, PDS requires a linear MDP condition, but we do not, while our algorithm achieves the regret of the order $O(\|\rfun - \widehat{\rfun}_{\spl}\|_{d_{\ld}})$ that is comparable to that of PDS.

The last family of methods are pessimistic RL \citep{yu2020mopo,jin2021pessimism}. Existing pessimistic RL algorithms generally utilize the labeled data \emph{only}, and involve the complex tasks of uncertainty qualification of the Q-function or the state transition function. Besides, the regret typically relies on both the reward estimation error and the supervised learning estimation error on $\tfun$ or the Q-function. In contrast, as Corollary~\ref{coro1} suggests, both regrets in Theorems \ref{thm:mf-fqi} and \ref{thm:mb} are dominated by the reward estimation error only, highlighting the benefit of incorporating $\ud$ in policy learning.

\section{Numerical Studies}
\label{sec:numericals}

\subsection{A synthetic environment}
\label{sec:synthetic-env}

To assess the empirical performance of our method, we first consider a synthetic environment with continuous states and discrete actions.  For this environment, we illustrate with the model-free Algorithm \ref{alg:mf-sorl}. We consider two settings: a full coverage setting, where the labeled data is generated under a uniform random policy, and a partial coverage setting, where 80\% of the tuples in the labeled dataset with sub-optimal actions are removed. More information regarding this environment and the implementation is given in Appendix \ref{sec:detail-synthetic-env}. 

We  compare our proposed method SPL with a number of alternative solutions, including PL, UDS, and PDS analyzed in Section \ref{sec:a-compare}. In addition, we compare with NoShare, which applies FQI solely to the labeled data, and PNoShare, a variant that also applies FQI to the labeled data but with a pessimistic reward. Note that PNoShare can be viewed as a member of the pessimistic RL family analyzed in Section \ref{sec:a-compare}. For a fair comparison, all methods employ FQI as the base RL algorithm, and the evaluation criterion is the average regret over 100 replications. 

\begin{figure}[t!]
\centering
\includegraphics[width=1.0\columnwidth]{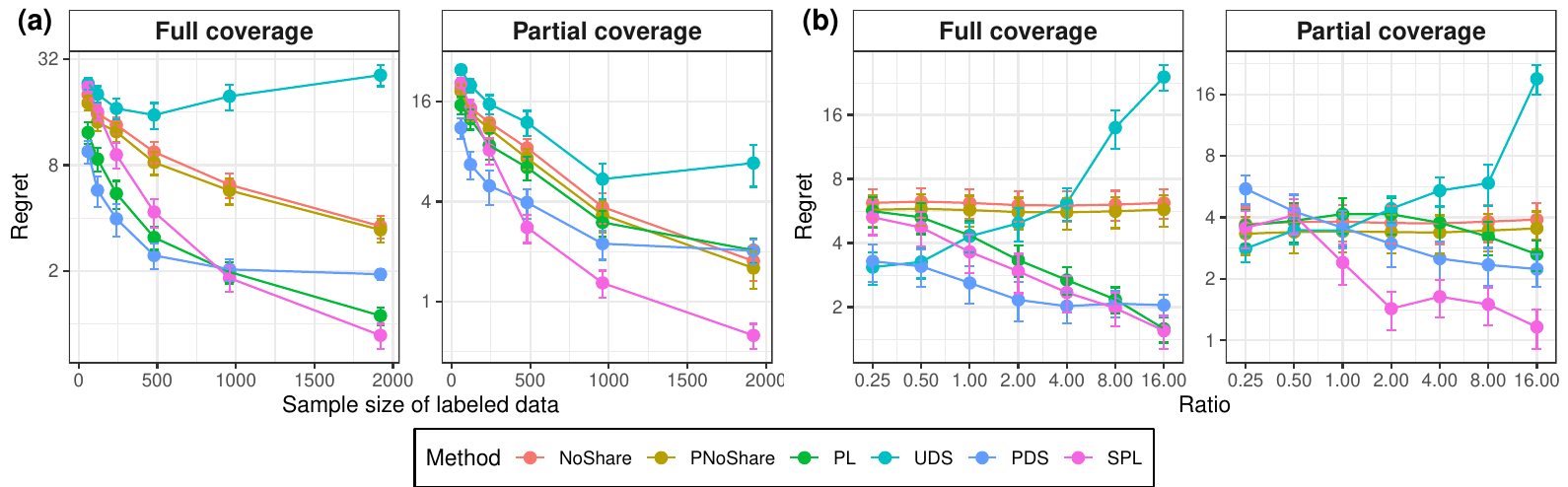}
\caption{Average regret of the policy learned by numerous model-free RL algorithms in the synthetic environment. (a): the size of the labeled data $n_\ld$ (horizontal axis) varies with $n_\ud / n_\ld$ fixed at 10; (b) the ratio $n_\ud / n_\ld$ (horizontal axis) varies with $n_\ld$ fixed at 32.}
\label{fig:FQI}
\end{figure}

Figure~\ref{fig:FQI}(a) reports the results for all methods when $n_\ld$ increases with $n_\ud / n_\ld$ fixed at 10, whereas Figure~\ref{fig:FQI}(b) reports the results when the ratio $n_\ud / n_\ld$ increases with $n_\ld$ fixed at 32. We see from the plot that PDS and our proposed SPL generally achieve the lowest regret. 
PDS only relies on labeled data for uncertainty quantification, and it performs worse than our SPL when the amount of either labeled or unlabeled data increases. NoShare and PL lag behind, as neither incorporates the pessimistic principle, while PNoShare also falls behind, as it only uses labeled data. Lastly, UDS exhibits increasing regret as the amount of unlabeled data grows, due to its overly conservative reward estimation.

\subsection{MuJoCo environments}
\label{sec:mujoco-env}

We next consider the MuJoCo environments with both continuous states and actions, including halfcheetah, walker2d, and hopper. For these environments, we illustrate with the model-based Algorithms \ref{alg:mb-sorl}. The label datasets consist of the demonstration of fully trained policies. As for the unlabeled dataset $\ud$, we consider two settings: full replay where $\ud$ consists of recording of all samples in the replay buffer when the policy reaches an expert level, and medium replay where $\ud$ consists of recording of all samples in the replay buffer observed during training until the policy reaches the medium level of performance.  More information regarding these environments and the implementation is given in Appendix~\ref{sec:detail-mujoco-env}. We again compare our method with the same set of alternative solutions in Section \ref{sec:synthetic-env}. For a fair comparison, all methods, except for PDS, employ MOPO as the base RL algorithm, and the evaluation criterion is average cumulative reward over 100 replications. Given that the practical implementation of PDS builds on the model-free FQI \citep{hu2023the}, we choose not to compare with it in this example. 

\begin{figure}[t!]
\centering
\includegraphics[width=0.8\linewidth, height=3in]{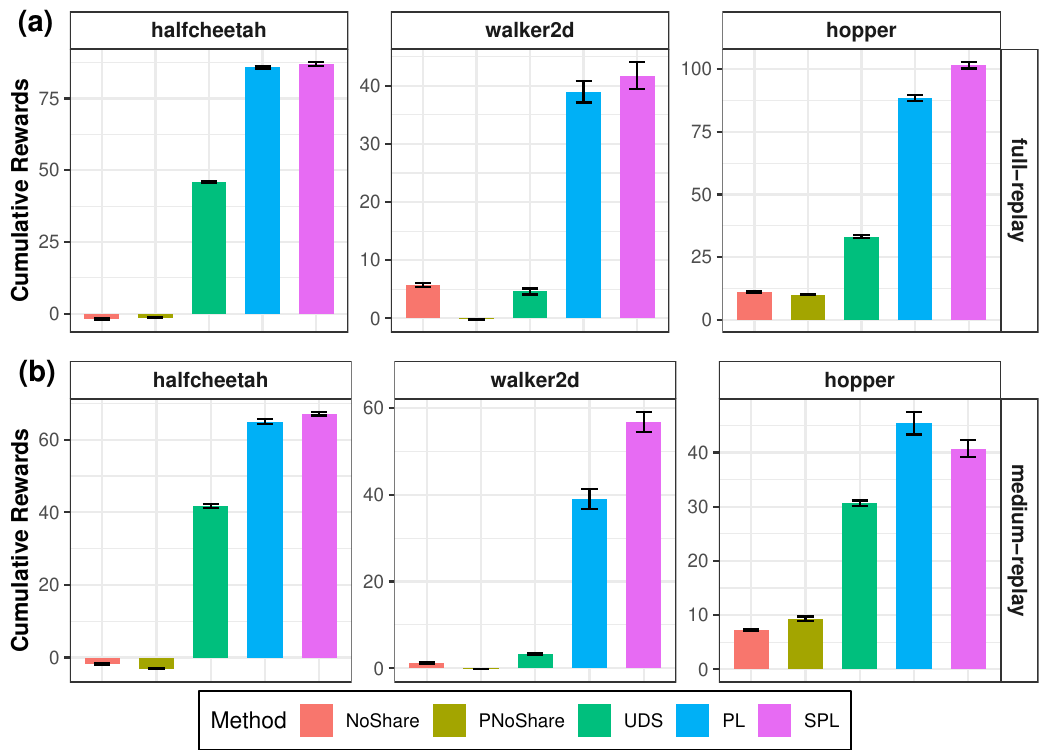}
\caption{Average cumulative reward of the policy learned by numerous model-based RL algorithms in the MuJoCo environments. The error bar shows the standard deviation. Rows represent different state-action distributions of $\ud$, and columns represent three different MuJoCo environments.}
\label{fig:mopo}
\end{figure}

Figure~\ref{fig:mopo}(a) reports the results for the full replay setting, and Figure~\ref{fig:mopo}(b) for the medium replay setting. We see that our proposed SPL generally achieves the largest cumulative reward in most cases, demonstrating its empirical effectiveness compared to the benchmark solutions. In particular, it surpasses PL, thanks to its integration of the semi-pessimism. Moreover, SPL substantially outperforms NoShare, PNoShare and UDS, highlighting the benefits of leveraging unlabeled data and accurately estimating the reward lower bound.

\subsection{Adaptive DBS Application}
\label{sec:real-data}

We revisit our motivation example of adaptive deep brain stimulation. DBS is an effective but difficult to optimize technology for Parkinson's disease. Offline RL offers a particularly useful framework for adaptive DBS, given the challenges of collecting patient data and the impracticality of certain actions. Meanwhile, adaptive DBS has its unique characteristics, such as the absence of reward information and the limited computing capability due to device constraints. Despite proof-of-principle studies, the optimization of adaptive DBS at scale is still in its infancy \citep{Neumann2023Adaptive}.

We analyze a semi-synthetic dataset collected by the University of California, San Francesco from a voluntary patient who received adaptive DBS under a pre-programed policy. The state information involves a five-dimensional processed power band data collected through the implanted electrodes. The action is a binary variable representing the stimulation amplitude: an action value of one corresponding to a high-amplitude stimulation at 3.1 milliamps, and a zero corresponding to a low-amplitude stimulation at 2.3 milliamps. The reward is defined as the negative log-transformed Bradykinesia score, which quantifies the severity of movement slowness, and always takes a negative value. The state information was collected every second, and the reward information every two minutes. We first fitted a random forests model for the observed reward given the state and action information, and another transformer model for the next state given the previous state and action information. Based on these models, we then simulated the state and reward data under a random policy of actions, which uniformly assigns one of the two stimulation amplitudes and each amplitude lasts for ten minutes. The total length of the random treatment stays for five hours and twenty minutes. The resulting data contains $n_\ld = 160$ observations of labeled reward, and $n_\ud = 19,200$ observations of unlabeled reward. 

We applied the proposed SPL method, along with the alternative solutions, to this data. We illustrate with the model-free Algorithm \ref{alg:mf-sorl} thanks to its simplicity, and as before, all methods employ FQI as the base RL algorithm for policy learning. The evaluation criterion is the cumulative reward, which takes a negative value in our setting, and a larger reward implies a better policy. We repeat the data simulation and evaluation 200 times. Table \ref{tab:dbs-result} reports the average cumulative reward along with the standard error in the parenthesis. We see that our SPL method achieves the best performance. In particular, SPL outperforms PDS, because of the random behavior policy and the large size of the unlabeled data. As a result, the conservative estimation of the Q-function as in PDS becomes unnecessary. Additionally, SPL outperforms PL, as the labeled data has a limited size, with fewer than 200 observations, which restricts its coverage. UDS performs poorly due to its large reward estimation errors. Finally, the methods using both labeled and unlabeled data, except for UDS, achieve higher cumulative rewards than those using only labeled data, demonstrating the practical value of semi-supervised RL.

\begin{table}[t!]
\centering
\caption{The average cumulative reward and the standard error (in the parenthesis) out of 200 replications for the adaptive DBS example. The larger the reward, the better the performance of the RL policy. }
\resizebox{\linewidth}{!}{
 \begin{tabular}{cccccc} \toprule
NoShare & PNoShare & PL & UDS & PDS & SPL \\ \midrule
-1198 (0.252) & -1179 (1.72) & -1166 (2.29) & -1205 (0.273) & -1142 (0.917) & \textbf{-1124} (0.816) \\ \bottomrule
\end{tabular}}
\label{tab:dbs-result}
\end{table}

\section{Discussions}
\label{sec:discussions}

In this article, we have developed a semi-supervised RL approach, which leverages both labeled and unlabeled data to tackle the challenges of distributional shift and missing reward. Our proposal is built upon the semi-pessimistic principle and a tight lower bound for the reward function. It relies on the core concept of semi-coverage, and the likelihood of this condition being satisfied increases with the amount of unlabeled data. 

Meanwhile, it is possible to further relax this semi-coverage condition to partial coverage, at the cost of a more complex uncertainty quantification of both the reward and state transition  functions. Specifically, we define a more pessimistic reward function,
\begin{equation*}
\widehat\rfun_\ppl(s, a) = \widehat\rfun_\spl(s, a) - z_{1-\alpha/2}\Delta_\taw(s, a),
\end{equation*}
which penalizes the pessimistic reward $\widehat\rfun_\spl$ in \eqref{eq:pessimistic-reward} by an additional transition-aware uncertainty quantification term, $\Delta_\taw(s, a) = \gamma \vub \sum_{s'} \big| \widehat{\tfun}(s'|s, a) - \tfun(s'|s, a) \big|$, that quantifies the uncertainty of the estimated state transition function, and $V_{\max}$ is as defined in Assumption \ref{con:value-func}. It essentially adopts the full pessimistic principle, and thus we refer to this method as \emph{pessimistic pseudo labeling} (PPL). The next theorem provides a theoretical justification for the model-based PPL. 

\begin{theorem}[Regret of the model-based PPL]\label{thm:mb-ta}
Suppose Assumptions~\ref{con:coverage}(a), \ref{con:uncertainty}, and \ref{con:value-func} hold. Let $B^*_{\mathcal{D}} \equiv \sup_{s, a} d^{\pi*}(s, a) / d_{\ld\cup\ud}(s, a)$, and $c_3$ denote some positive constant. Then the regret of the optimal policy $\widehat{\pi}$ computed from the model-based PPL,$ \E[J(\pi^*) - J(\widehat{\pi})]$, is upper bounded by
\begin{eqnarray}\label{eqn:regret-model-based-ta}
\begin{split}
\frac{\alpha (2R_{\max}+\gamma |\mathcal{S}|V_{\max})}{(1-\gamma)^2} + 
c_3\left[\frac{\sqrt{B_{\ld}^*}}{(1-\gamma)^2} \|\rfun-\widehat{\rfun}^\textup{TA}_{\ell}\|_{d_{\ld}} +\frac{\gamma V_{\max} \sqrt{B_{\mathcal{D}}^*}}{(1-\gamma)^2}\|\tfun-\widehat{\tfun}\|_{d_{\ld \cup \ud}} \right].
\end{split}
\end{eqnarray}
\end{theorem}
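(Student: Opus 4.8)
The plan is to mimic the regret decomposition used for the model-based Algorithm \ref{alg:mb-sorl} (Theorem \ref{thm:mb}), but carry the extra transition-aware penalty $\Delta_\taw$ through the argument so that the single-policy concentration coefficient $B^*_{\mathcal{D}}$ (coverage of $\pi^*$ only on $\ld\cup\ud$) replaces the uniform one $B_{\mathcal{D}}$ in the transition term. First I would write the regret $J(\pi^*)-J(\widehat{\pi})$ via the standard value-difference (performance-difference) lemma relative to the \emph{estimated} MDP $(\sspace,\aspace,\widehat{\rfun}_\ppl,\widehat{\tfun},\gamma)$: since $\widehat{\pi}$ is (approximately) optimal for the estimated MDP, $\widehat{J}(\widehat{\pi}) \ge \widehat{J}(\pi^*)$, so $J(\pi^*)-J(\widehat{\pi}) \le [J(\pi^*)-\widehat{J}(\pi^*)] + [\widehat{J}(\widehat{\pi})-J(\widehat{\pi})]$. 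Both bracketed terms are differences between the true and estimated value of a \emph{fixed} policy, which by the simulation lemma expand as a $\gamma$-discounted sum of per-step reward-model error $R-\widehat{\rfun}_\ppl$ and transition error (weighted by $V_{\max}$), evaluated under the visitation distribution of that policy.

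The key step is handling the two pieces asymmetrically. For the $J(\pi^*)-\widehat{J}(\pi^*)$ term, the expectations are taken under $d^{\pi^*}$, so a change of measure from $d^{\pi^*}$ to $d_{\ld}$ (for the reward error, using Assumption \ref{con:coverage}(a), giving the $\sqrt{B^*_{\ld}}$ factor as in Theorem \ref{thm:mb}) and from $d^{\pi^*}$ to $d_{\ld\cup\ud}$ (for the transition error, giving the $\sqrt{B^*_{\mathcal{D}}}$ factor) is what produces the two concentration coefficients in \eqref{eqn:regret-model-based-ta}. For the $\widehat{J}(\widehat{\pi})-J(\widehat{\pi})$ term, which is evaluated under the visitation distribution $d^{\widehat\pi}_{\widehat\tfun}$ of $\widehat\pi$ in the \emph{estimated} model and hence is not directly controllable by $\pi^*$-coverage, the role of the transition-aware penalty $\Delta_\taw$ is crucial: because $\widehat{\rfun}_\ppl = \widehat{\rfun}_\spl - z_{1-\alpha/2}\Delta_\taw$ and $\Delta_\taw(s,a)=\gamma V_{\max}\sum_{s'}|\widehat{\tfun}(s'|s,a)-\tfun(s'|s,a)|$ pointwise dominates (with probability $1-\alpha$, via Assumption \ref{con:uncertainty} applied to $\widehat{\rfun}_\spl$, and the telescoping bound $|V^{\widehat\pi}_{\widehat\tfun} - V^{\widehat\pi}_{\tfun}| \le \frac{\gamma V_{\max}}{1-\gamma}\mathbb{E}[\sum_{s'}|\widehat{\tfun}-\tfun|]$) the one-step optimism error, the estimated value $\widehat{J}(\widehat\pi)$ is itself a high-probability lower bound on $J(\widehat\pi)$, so that bracketed term contributes only the $\alpha$-type-I term (scaled by $2R_{\max}+\gamma|\sspace|V_{\max}$, absorbing the magnitude of $\widehat{\rfun}_\ppl$ after the extra penalty) rather than any coverage-dependent term. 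Collecting: the reward-error contribution gives the $\sqrt{B^*_{\ld}}\|\rfun-\widehat{\rfun}^\textup{TA}_\ell\|_{d_\ld}$ term, the transition-error contribution under $d^{\pi^*}$ gives the $\gamma V_{\max}\sqrt{B^*_{\mathcal{D}}}\|\tfun-\widehat{\tfun}\|_{d_{\ld\cup\ud}}$ term, the optimism failure gives the $\alpha(2R_{\max}+\gamma|\sspace|V_{\max})/(1-\gamma)^2$ term, and the $(1-\gamma)^{-2}$ factors come from one $(1-\gamma)^{-1}$ in each simulation-lemma sum and one more from the change-of-measure normalization of $d^{\pi}$.

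The main obstacle I anticipate is making the optimism argument fully rigorous for the \emph{model-based} estimated-MDP value rather than for a sequence of Q-iterates: one must show that, on the event that $\widehat{\rfun}_\ppl \le \rfun$ pointwise AND $\Delta_\taw$ upper-bounds the transition contribution, the optimal value of the \emph{estimated} MDP under-estimates the true optimal value of the \emph{true} MDP uniformly over states — i.e., a pessimism-propagation step through the Bellman optimality operator of the estimated model. This requires care because the estimated transition $\widehat{\tfun}$ appears inside the value recursion, so the naive telescoping must be set up as $Q^*_{\text{est}}(s,a) = \widehat{\rfun}_\ppl(s,a) + \gamma\mathbb{E}_{\widehat{\tfun}}\max_{a'}Q^*_{\text{est}} \le \rfun(s,a) - \Delta_\taw(s,a) + \gamma\mathbb{E}_{\widehat\tfun}\max Q^*_{\text{est}}$ and then the $\Delta_\taw$ term must absorb $\gamma(\mathbb{E}_{\widehat\tfun}-\mathbb{E}_{\tfun})\max Q^*_{\text{est}}$ using $\|\max Q^*_{\text{est}}\|_\infty \le V_{\max}$, followed by an induction/fixed-point comparison to conclude $Q^*_{\text{est}} \le Q^*$. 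The factor $|\sspace|$ in the $\alpha$-term and the use of the $\ell_1$ (total variation $\times 2$) form of $\Delta_\taw$ are exactly what the union bound over next states in that step demands; everything downstream is a routine change of measure plus the concentration bounds already invoked in Theorem \ref{thm:mb}.
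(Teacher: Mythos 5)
Your proposal is correct and follows essentially the same route as the paper's argument for Theorem~\ref{thm:mb}, adapted to carry the transition-aware penalty: the $\widehat{\pi}$-side term is dominated pointwise by $\Delta_\taw$ so that it contributes only the $\alpha$ failure term, while the $\pi^*$-side term is handled by a change of measure under $d^{\pi^*}$, which is exactly why only the single-policy coefficients $B^*_{\ld}$ and $B^*_{\mathcal{D}}$ appear. Your decomposition is phrased at the level of the values $J$ and $\widehat{J}$ rather than the paper's Q-function split into $I_1+I_2$ under $d_0^{\widehat{\pi}}$, but the two are equivalent, and all the key steps you identify (pessimism propagation through the estimated Bellman recursion, simulation-lemma telescoping with $\|\widehat{Q}\|_\infty\le V_{\max}$, and the crude bound on $\sum_{s'}|\widehat{\tfun}-\tfun|$ that produces the $|\sspace|$ factor in the $\alpha$ term) match the intended argument.
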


\noindent
We next compare this extended method PPL with our proposed SPL. Analytically, SPL adopts the semi-pessimistic principle that only quantifies the uncertainty of the estimated reward function, while PPL adopts the full pessimistic principle that accounts for the uncertainty in both the reward and transition dynamics. As such, SPL requires the semi-coverage condition, while PPL only requires the partial coverage condition. However, PPL comes at the cost of a more challenging task of estimating $\Delta_\taw$, which involves quantifying the uncertainty of the unknown transition probability $\tfun$. Theoretically, compared to the upper bound in \eqref{eqn:regret-model-based} in Theorem~\ref{thm:mb}, the upper bound in \eqref{eqn:regret-model-based-ta} in Theorem \ref{thm:mb-ta} involves the single-policy concentration coefficient $\mathcal{B}_{\mathcal{D}}^*$, instead of the uniform concentration coefficient $\mathcal{B}_{\mathcal{D}}$. It thus in effect relaxes Assumption \ref{con:coverage}(b). Furthermore, compared to the model-based offline RL algorithms \citep[e.g., ][]{kidambi2020morel} that only utilize labeled data $\ld$, the second term in the bracket of \eqref{eqn:regret-model-based-ta} is often substantially smaller, again reflecting the benefit of using both $\ld$ and $\ud$ to learn $\widehat\tfun$. Empirically, there is a trade-off between the two methods, as illustrated in Figure \ref{fig:tradeoff}: SPL performs better when the unlabeled data is generated by sub-optimal policies close to random exploration, whereas PPL excels when the unlabeled data distribution is closer to the optimal policy. 

\begin{figure}[H]
\centering
\includegraphics[width=0.5\linewidth]{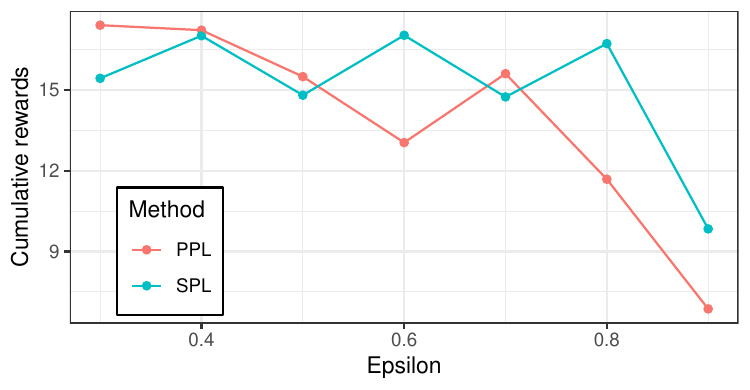}
\vspace*{-15pt}
\caption{The cumulative reward of SPL and PPL under different values of $\epsilon$, where the unlabeled data is generated using the $\epsilon$-greedy algorithm. A small $\epsilon$ indicates that the behavior policy that generates the unlabeled data is close to the optimal policy, whereas a large $\epsilon$ indicates that the behavior policy is close to random exploration.}
\label{fig:tradeoff}
\end{figure}

\bibliographystyle{Chicago}
\bibliography{ref-ssorl.bib}

\newpage
\appendix

\section{Proofs}
\label{sec:proofs}

\subsection{Notations}
\label{sec:proof-notation}

We first summarize the notations that are used throughout the proofs. Recall that $\ld$ and $\ud$ denote the labeled and unlabeled datasets, respectively. Let $n_{\ld}$ and $n_{\ud}$ denote their sample sizes. Let $\mathcal{D}$ denote their union $\mathcal{L}\cup \mathcal{U}$, and $n=n_{\ld}+n_{\ud}$. Let $d_{\ld}$, $d_{\ud}$ and $d_{\mathcal{D}}$ denote the state-action distributions of $\ld$, $\ud$, and $\mathcal{D}$, respectively. For any function $f$, let $\|f\|_{d_\mathcal{D}}$ denote the norm $\sqrt{\mathbb{E}_{(S,A)\sim d_{\mathcal{D}}} f^2(S,A)}$. 

For any deterministic policy $\pi$, i.e., for any state $s$, there exists a unique action $a$ such that $\pi(a|s)=1$, we use $\pi(s)$ to denote this action. Note that any estimated greedy policy $\widehat{\pi}$ derived from an estimated optimal Q-function $\widehat{Q}$ is deterministic, and the optimal policy $\pi^*$ is greedy with respect to the oracle optimal Q-function $Q^*$. 

Recall that $\rfun$ and $\tfun$ denote the oracle reward and state transition function, respectively. Let $\widehat{\rfun}_{\ell}$  and $\widehat{\tfun}$ denote the estimated lower bound of $\rfun$ and the estimated $\tfun$, respectively. Additionally, let $\event(s, a)$ denote the event $\{ |\widehat{\rfun}_{\ell}(s, a) - \rfun(s, a)| \leq \Gamma_{\alpha}(s, a) \}$. Define $\|\tfun-\widehat{\tfun}\|_{d_{\mathcal{D}}} \equiv \sqrt{\E\big[\E_{(S,A)\sim d_{\mathcal{D}}} \textup{TV}_{\widehat{\tfun}}^2(S,A)\big]}$ where the outer expectation $\mathbb{E}[\cdot]$ is taken over the estimated transition $\widehat{\tfun}$, and $\textup{TV}_{\widehat{\tfun}}(s,a)$ is the total variation distance between $\widehat{\tfun}(\cdot|s,a)$ and $\tfun(\cdot|s,a)$, i.e., $\sum_{s'}|\widehat{\tfun}(s'|s,a)-\tfun(s'|s,a)|/2$.

Let $\rho_0$ denote the probability mass function of a given reference initial state distribution. Let $\prob^{\pi}_{t}(s'|s) \equiv \prob^{\pi}(S_t=s'|S_0=s)$ denote the $t$-step transition probability from $s$ to $s'$ under $\pi$, and $d^{\pi}(s'|s) \equiv (1-\gamma) \sum\limits_{t=0}^{\infty} \gamma^t \prob^{\pi}_{t}(s'|s)$ denote the $\gamma$-discounted visitation probability for visiting $s'$ from a given initial state $s$ with $\prob^{\pi}_{0}(s'|s)=\mathbb{I}(s'=s)$. Moreover, let $d^{\pi}(s') \equiv \sum_s d^{\pi}(s'|s) \rho_0(s)$ denote the resulting $\gamma$-discounted visitation probability assuming the initial state $s$ follows $\rho_0$, and $d^{\pi}_0(s') \equiv \sum_s d^{\pi}(s'|s) \prob(S_0=s)$ as the corresponding distribution under the true initial state distribution.

Let $\mathcal{B}^*$ denote the Bellman optimality operator, such that $\mathcal{B}^*Q(s,a)=\rfun(s,a) + \gamma \sum\limits_{s'} \max\limits_{a'}$ $Q(s',a') \times \tfun(s'|s,a)$, and $\mathcal{B}_{\ell}^*$ denote the semi-pessimistic Bellman optimality operator, such that $\mathcal{B}_{\ell}^*Q(s,a)=\widehat{\rfun}_{\ell}(s,a)+\gamma \sum\limits_{s'} \max\limits_{a'} Q(s',a')\tfun(s'|s,a)$. 

Finally, recall that $g(s,a)$ denotes the linear basis function used to fit the reward function. Let $\Omega_{\ld}$ and $\Omega_{\ud}$ denote the expected Gram matrices $\mathbb{E}_{(S,A)\sim d_\ld} [g(S,A)g^\top(S,A)]$ and $\mathbb{E}_{(S,A)\sim d_\ud} [g(S,A)g^\top(S,A)]$, respectively. Define $\theta^*_{\text{\ini}}=\Omega_{\ld}^{-1}\mathbb{E} [g(S,A)R(S,A)]$,  $\theta^*_{\ld}=\Omega_{\ld}^{-1}\mathbb{E} [g(S,A)(R(S,A)-f(S,A))]$, and $\theta^*_{\ud}=\Omega_{\ud}^{-1}\mathbb{E} [g(S,A)f(S,A)]$.

\subsection{Proof of Lemma \ref{lemma:ppi-adv-formal}}
\begin{proof}
By definition, we have
\begin{eqnarray*}
\Delta_{\text{\ini}}(s,a)&=&\frac{1}{n_{\ld}}g^\top(s,a) \Omega_{\ld}^{-1} \mathbb{E}_{(S,A)\sim d_\ld} \Big\{g(S,A) [R-g^\top(S,A)\theta^*_{\text{\ini}}]^2 g^\top(S,A)\Big\}\Omega_{\ld}^{-1} g(s,a), \\
\Delta_{\text{\sug}}^*(s,a)&=&\frac{1}{n_{\ld}}g^\top(s,a)\Omega_{\ld}^{-1}\mathbb{E}_{(S,A)\sim d_\ld} \Big\{g(S,A)[R-f(S,A)-g^\top(S,A)\theta_{\ld}^*]^2g^\top(S,A)\Big\} \Omega_{\ld}^{-1}g(s,a)\\ 
    &+&\frac{1}{n_{\ud}}g^\top(s,a)\Omega_{\ld}^{-1}\mathbb{E}_{(S,A)\sim d_\ld} \Big\{g(S,A)[f(S,A)-g^\top(S,A)\theta_{\ud}^*]^2g^\top(S,A)\Big\} \Omega_{\ld}^{-1}g(s,a).
\end{eqnarray*}
As $n_{\ld}/n_{\ud}\to 0$, it suffices to show
\begin{eqnarray*}
&&g^\top(s,a) \Omega_{\ld}^{-1} \mathbb{E}_{(S,A)\sim d_\ld} \Big\{g(S,A) [R-g^\top(S,A)\theta^*_{\text{\ini}}]^2 g^\top(S,A)\Big\}\Omega_{\ld}^{-1} g(s,a)\\
&<&g^\top(s,a)\Omega_{\ld}^{-1}\mathbb{E}_{(S,A)\sim d_\ld} \Big\{g(S,A)[R-f(S,A)-g^\top(S,A)\theta_{\ld}^*]^2g^\top(S,A)\Big\} \Omega_{\ld}^{-1}g(s,a),
\end{eqnarray*}
or equivalently,
\begin{eqnarray*}
    &&\mathbb{E}_{(S,A)\sim d_\ld} \Big\{g(S,A) [R-g^\top(S,A)\theta^*_{\text{\ini}}]^2 g^\top(S,A)\Big\}\\
    &\prec& \mathbb{E}_{(S,A)\sim d_\ld} \Big\{g(S,A)[R-f(S,A)-g^\top(S,A)\theta_{\ld}^*]^2g^\top(S,A)\Big\}.
\end{eqnarray*}
As the conditional mean of the residual $R-R(S,A)$ is zero given $S$ and $A$, it suffices to show that
\begin{eqnarray}\label{eqn:prooflemmappiadv}
\begin{split}
    &\mathbb{E}_{(S,A)\sim d_\ld} \Big\{g(S,A) [R(S,A)-g^\top(S,A)\theta^*_{\text{\ini}}]^2 g^\top(S,A)\Big\}\\
    \prec & \mathbb{E}_{(S,A)\sim d_\ld} \Big\{g(S,A)[R(S,A)-f(S,A)-g^\top(S,A)\theta_{\ld}^*]^2g^\top(S,A)\Big\}. 
\end{split}
\end{eqnarray}
Specifically, under Assumption \ref{assump:auxmodel}, we have $\theta_\ld^*=(1-\beta_0)\Omega_{\ld}^{-1}\mathbb{E} [g(S,A)R(S,A)]-\beta_1=(1-\beta_0) \theta^*_{\text{\ini}}-\beta_1$. 
Therefore the right-hand-side of \eqref{eqn:prooflemmappiadv} equals
\begin{eqnarray*}
    &&\mathbb{E}_{(S,A)\sim d_\ld} \Big\{g(S,A)g^\top(S,A)e^2\Big\}\\
    &+&(1-\beta_0)^2 \mathbb{E}_{(S,A)\sim d_\ld} \Big\{g(S,A)g^\top(S,A)[R(S,A)-g^\top(S,A)\theta^*_{\text{\ini}}]^2\Big\}\\
    &-&2(1-\beta_0) \mathbb{E}_{(S,A)\sim d_\ld} \Big\{g(S,A)g^\top(S,A) e [R(S,A)-g^\top(S,A)\theta^*_{\text{\ini}}] \Big\}.
\end{eqnarray*}
Note that the last line is zero as $\mathbb{E} [e|R(S,A), g(S,A)]=0$. Then by Assumption \eqref{eqn:someinequality}, we obtain \eqref{eqn:prooflemmappiadv}, which completes the proof of Lemma \ref{lemma:ppi-adv-formal}.
\end{proof}

\subsection{An auxiliary lemma}

We introduce a supporting lemma that upper bounds the statistical estimation error encountered during each iteration of Algorithm \ref{alg:mf-sorl}, which in turn facilitates the proof of Theorem~\ref{thm:mf-fqi}. In the model-free algorithm, the Q-estimator $\widehat{Q}$ is computed recursively through FQI. We thus let $\widehat{Q}^{(k)}$ denote the Q-estimator at the $k$th iteration, and $\widehat{\pi}^{(k)}$ denote the estimated optimal policy derived from $\widehat{Q}^{(k)}$.

\newcommand{\floor}[1]{\lfloor #1 \rfloor}
\newcommand{\Var}{\textup{Var}}
\newcommand{\cQ}{\mathcal{Q}}
\newcommand{\cF}{\mathcal{F}}

\begin{lemma}\label{lemma:sample-optimal-gap}
Let $Q^{(k)} \equiv \argmin\limits_{Q \in \fclass} \| Q - \oblo \widehat{Q}^{(k-1)}\|^2_{d_{\mathcal{D}}}$, and $\epsilon^{(k)} \equiv \| \widehat{Q}^{(k)} - \oblo \widehat{Q}^{(k-1)}\|^2_{d_{\mathcal{D}}}$. Suppose Assumptions~\ref{con:value-func} to \ref{con:completeness} hold. Then, for any $k$, with probability at least $1-\left(\delta+\floor{\frac{n}{q}}\beta(q)\right)$,
\begin{equation}\label{eq:epsk}
\epsilon^{(k)}\le \frac{(784q-384)V_{\max}^2\ln(2\delta^{-1}|\cQ|^2|\cF|)+24q V_{\max}^2}{n}.
\end{equation}
\end{lemma}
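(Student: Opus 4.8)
The plan is to view FQI's $k$th step as an instance of least-squares regression with a finite hypothesis class $\cQ$, where the response variable is $Y = \widehat{\rfun}_\spl(S,A) + \gamma\max_{a'}\widehat{Q}^{(k-1)}(S',a')$ and the regression function is $\oblo\widehat{Q}^{(k-1)}(S,A) = \E[Y \mid S,A]$ (conditioning on $\widehat{Q}^{(k-1)}$, which is measurable with respect to earlier data; one should either condition on it or, since $\cQ$ is finite, take a union bound over all possible values of $\widehat{Q}^{(k-1)}\in\cQ$). Completeness (Assumption~\ref{con:completeness}) guarantees $\oblo\widehat{Q}^{(k-1)}\in\cQ$, so $Q^{(k)} = \oblo\widehat{Q}^{(k-1)}$ and the approximation error vanishes; hence $\epsilon^{(k)}$ is purely an estimation error. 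First I would write the standard ``basic inequality'' for the empirical risk minimizer: since $\widehat{Q}^{(k)}$ minimizes the empirical squared loss and $Q^{(k)}\in\cQ$ is feasible, a little algebra on $\sum (Y-\widehat{Q}^{(k)})^2 \le \sum (Y-Q^{(k)})^2$ gives $\|\widehat{Q}^{(k)}-Q^{(k)}\|^2_{d_\cD} \le \frac{2}{n}\sum_i \xi_i\,(\widehat{Q}^{(k)}-Q^{(k)})(S_i,A_i)$, where $\xi_i = Y_i - \oblo\widehat{Q}^{(k-1)}(S_i,A_i)$ is the mean-zero noise. So the whole problem reduces to a uniform (over $\cQ$) control of the empirical cross term against the $d_\cD$-norm of $Q - Q^{(k)}$.

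The second step is the concentration argument under dependence. For i.i.d. tuples this is a textbook localized-Bernstein / one-step ``peeling'' bound: for each fixed $Q\in\cQ$, $\frac1n\sum_i \xi_i (Q-Q^{(k)})(S_i,A_i)$ is a sum of bounded mean-zero terms with variance controlled by $\|Q-Q^{(k)}\|^2_{d_\cD}$ (using $|\xi_i|\le 2V_{\max}$, $|Q-Q^{(k)}|\le 2V_{\max}$ from Assumption~\ref{con:value-func}); Bernstein plus a union bound over $|\cQ|$ (and the extra $|\cQ||\cF|$ factor from the union over $\widehat{Q}^{(k-1)}$ and $\widehat{\rfun}_\spl\in\cF$, cf. Assumption~\ref{con:func-class}) yields, with probability $1-\delta$, a bound of the form $\tfrac12\|\widehat{Q}^{(k)}-Q^{(k)}\|^2_{d_\cD} + \frac{c\,V_{\max}^2\ln(\delta^{-1}|\cQ|^2|\cF|)}{n}$; rearranging absorbs the norm term and gives $\epsilon^{(k)} = O\!\big(V_{\max}^2\ln(\delta^{-1}|\cQ|^2|\cF|)/n\big)$. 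To handle the exponential $\beta$-mixing (Assumption~\ref{con:indep-tuple}), I would invoke the standard blocking technique: partition the $n$ tuples into roughly $n/q$ blocks of length $q$, and use the coupling characterization of $\beta$-mixing so that the block sums can be replaced by independent copies at a total cost of $\floor{n/q}\,\beta(q)$ in probability (this is exactly where the $\floor{n/q}\beta(q)$ term in the statement comes from). Applying the i.i.d. Bernstein bound to the $\floor{n/q}$ independent blocks — each block now being a sum of $q$ bounded terms, which inflates the effective range and variance by a factor of $q$ — reproduces the $(784q-384)$ and $24q$ constants (the exact numerical constants are a bookkeeping exercise in tracking Bernstein's constants through the block decomposition and the factor-of-$2$ absorption).

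The main obstacle is the dependence handling done carefully enough to land on the precise constants in \eqref{eq:epsk}: one must be scrupulous about (i) the exact form of the blocking (handling the leftover $n - q\floor{n/q}$ tuples, which contributes the additive $24qV_{\max}^2/n$ slack), (ii) legitimately reducing to independence via the $\beta$-mixing coupling lemma without losing more than $\floor{n/q}\beta(q)$, and (iii) propagating Bernstein's numerical constants through the self-bounding/peeling step where half the squared-norm is absorbed. Conceptually none of this is deep — it is the Antos--Szepesvári--Munos style single-iteration FQI analysis adapted to $\beta$-mixing data — but getting $784q-384$ rather than some other constant requires care. A secondary subtlety worth a sentence is that $\widehat{Q}^{(k-1)}$ and $\widehat{\rfun}_\spl$ are data-dependent, so the concentration must be made uniform over $\cQ$ and $\cF$ before plugging them in; since both classes are finite this only costs the stated logarithmic factors and is the reason $|\cQ|^2|\cF|$ (not $|\cQ|$) appears inside the log.
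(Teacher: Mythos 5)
Your proposal is correct and follows essentially the same route as the paper: reduce to a zero-mean empirical process via the empirical-risk-minimizer basic inequality (the paper phrases this as the centered process of the difference of squared losses $\psi$ rather than the expanded cross term, but these are algebraically the same), take a union bound over the finite classes to get the $|\cQ|^2|\cF|$ factor, handle the $\beta$-mixing by the blocking/coupling construction at a probability cost of $\floor{n/q}\beta(q)$, apply Bernstein with the self-bounding variance $\Var(\psi)\lesssim V_{\max}^2\epsilon^{(k)}$, absorb $\epsilon^{(k)}/2$, and invoke completeness to kill the approximation term. The only detail worth noting is that the paper applies Bernstein separately to the odd- and even-indexed blocks (each family being i.i.d. after coupling) and handles the $n-q\floor{n/q}$ leftover terms crudely, which is exactly where your anticipated bookkeeping for the constants $784q-384$ and $24q$ takes place.
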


\begin{proof}
For any $Q,Q'\in \mathcal{Q}$ and $f\in \mathcal{F}$, define the empirical loss function
\begin{eqnarray*}
        \|Q-\mathcal{B}_f Q'\|_{\mathcal{D}}^2\equiv \frac{1}{|\mathcal{D}|} \sum_{(S,A,S')\in \mathcal{D}} |Q(S,A)-f(S,A)-\gamma \max_a Q'(S',a)|^2.
\end{eqnarray*}
Note that
\begin{align}\label{eqn:maininequality}
\begin{split}
            \epsilon^{(k)} -\| Q^{(k)}
            - \oblo \widehat{Q}^{(k-1)}\|^2_{d_{\mathcal{D}}}
            =&
            \Big\{ \| \widehat{Q}^{(k)} - \mathcal{B}^*_{\ell} \widehat{Q}^{(k-1)}\|^2_{\mathcal{D}} - \| Q^{(k)} - \mathcal{B}^*_{\ell} \widehat{Q}^{(k-1)}\|^2_{\mathcal{D}} \Big\}
            \\
            &+ \Big\{ \| \widehat{Q}^{(k)} - \oblo \widehat{Q}^{(k-1)}\|^2_{d_{\mathcal{D}}} -  \| Q^{(k)}
            - \oblo \widehat{Q}^{(k-1)}\|^2_{d_{\mathcal{D}}}
            \\
            &\qquad- [\| \widehat{Q}^{(k)} - \mathcal{B}^*_{\ell} \widehat{Q}^{(k-1)}\|^2_{\mathcal{D}} - \| Q^{(k)} - \mathcal{B}^*_{\ell} \widehat{Q}^{(k-1)}\|^2_{\mathcal{D}}] \Big\},
            \\
            \leq& \| \widehat{Q}^{(k)} - \oblo \widehat{Q}^{(k-1)}\|^2_{d_{\mathcal{D}}} - \| Q^{(k)}
            - \oblo \widehat{Q}^{(k-1)}\|^2_{d_{\mathcal{D}}}
            \\
            &- [ \| \widehat{Q}^{(k)} - \mathcal{B}^*_{\ell} \widehat{Q}^{(k-1)}\|^2_{\mathcal{D}} - \| Q^{(k)} - \mathcal{B}^*_{\ell} \widehat{Q}^{(k-1)}\|^2_{\mathcal{D}} ]
\end{split}
\end{align}
where the inequality holds by the definition of $\mathcal{Q}^{(k)}$ and the fact that $Q^{(k)}\in \mathcal{Q}$. The right-hand-side of \eqref{eqn:maininequality} can be further upper bounded by the following zero-mean empirical process:
\begin{equation}\label{eqn:emp}
        \sup_{f \in \mathcal{F}, Q, Q' \in \fclass} \frac{1}{|\mathcal{D}|} \sum_{(S,A,S')\in \mathcal{D}} \psi(S, A, S'; f, Q, Q') - \E[\psi(S, A, S'; f, Q, Q')],
\end{equation}
where $\psi(s, a, s'; f, Q, Q') = |Q(s, a) - f(s,a) - \gamma \max_a Q'(s', a)|^2 - |Q^{(k)}(s, a) - f(s,a) - \gamma \max_a Q'(s', a)|^2$.

Suppose the dataset $\mathcal{D}$ contains a total of $K$ trajectories, with the $j$th trajectory containing $H_j$ many data points. We can sort all state-action-next-state triplets in $\mathcal{D}$ as 
\begin{eqnarray}\label{eqn:someset}
\big\{O_{1, 0}, \ldots, O_{1, H_1}, \ldots, O_{K, 0}, \ldots, O_{K, H_K} \big\},
\end{eqnarray}
where each $O_{j, t}$ denotes a triplet at $t$th time step in the $j$th trajectory. For the $i$th triplet in \eqref{eqn:someset}, we denote $\psi(S_i, A_i, S'_i; f, Q, Q')$ as $\psi_i$, which leads to the sequence $\Psi=(\psi_1, \psi_2, \dots, \psi_n)$. We partition $\Psi$ into subsequences of length $q$. Let $U_i=(\psi_{iq+1}, \dots, \psi_{(i +1)q})$. Then $\Psi$ can be represented by $(U_0, \dots, U_{\floor{\frac{n}{q}}-1}, \psi_{\floor{\frac{n}{q}}q+1}, \dots, \psi_n)$. 
    
By Assumption \ref{con:indep-tuple}, following the discussion on Lemma 4.1 in \citet{dedecker2002maximal}, we can construct a sequence of random variables $\{\tilde{U}_i\}_i=\{(\tilde{\psi}_{i q+1}, \dots, \tilde{\psi}_{(i+1)q})\}_i$, such that
\begin{enumerate}
\item[(i)] Each $U_i$ has the same marginal distribution as $\tilde{U}_i$; 
\item[(ii)] For each $i$, $\mathbb{P}(U_i=\tilde{U}_i)\ge 1-\beta(q)$;
\item[(iii)] $\tilde{\Psi}=(\tilde{U}_0, \dots, \tilde{U}_{\floor{\frac{n}{q}}-1}, \psi_{\floor{\frac{n}{q}}q+1}, \dots, \psi_n)$ equals $\Psi$ with probability at least $1-\floor{\frac{n}{q}}\beta(q)$;
\item[(iv)] $\{\tilde{U}_{2i}\}_{i\geq0}$ are i.i.d., and so are $\{\tilde{U}_{2i+1}\}_{i\geq0}$.
\end{enumerate}
Let $\omega_i=\tilde{\psi}_{i q+1}+\dots +\tilde{\psi}_{(i+1)q}$, the sum of all elements in $\tilde{U}_i$. It follows from (iv) that both $\{\omega_{2i}\}_{i\geq0}$ and $\{\omega_{2i+1}\}_{i\geq0}$ are i.i.d. 

Without loss of generality, suppose all functions in $\mathcal{F}$ are uniformly bounded by $R_{\max}$. When this condition does not hold, we can restrict to a subclass of $\mathcal{F}$ bounded by $R_{\max}$. Additionally, by Assumption \ref{con:value-func}, $|Q^{(k)}(s,a)|$ is uniformly upper bounded by $V_{\max}$. It follows that $|\tilde{\psi}_i|\leq 8V_{\max}^2$, and $|\omega_i|\leq 8q V_{\max}^2$. Besides, with some calculations, the variance of $\psi_i$ can be upper bounded by
\begin{eqnarray}\label{eqn:variancebound}
\begin{split}
            \Var(\psi_i) \leq 32V_{\max}^2 \epsilon^{(k)} \equiv \sigma^2_{\psi}.
\end{split}
\end{eqnarray}
Therefore, we have
\begin{align*}
\Var(\omega_0) =  \sum_{i=1}^q \Var(\tilde{\psi}_i)+2\sum_{1\le i<j\le q}cov(\tilde{\psi}_i,\tilde{\psi}_j) 
\le q \sigma_{\psi}^2+2q(q-1)\sigma_{\psi}^2 = q(2q-1)\sigma_{\psi}^2 :=\sigma^2,
\end{align*}
as $\tilde{\psi_i}$ has the same marginal distribution to $\psi_i$.

Since all the function classes are finite hypothesis classes, we apply the Bernstein's inequality to upper bound \eqref{eqn:emp}. Let $n_1$ denote the number of $\omega_i$'s with odd $i$, and $n_2$ denote the number with even $i$. We have $n_1+n_2=\floor{\frac{n}{q}}$. It follows from the Bernstein's inequality that, for any $\epsilon_1, \epsilon_2 \geq 0$,
\begin{align*}
\mathbb{P}\left(\sum_{i=0}^{n_1-1} \left[\omega_{2i+1} - \E(\omega_{2i+1})\right]>\epsilon_1\right)        & \le \exp\left(\frac{-\epsilon_1^2}{2(n_1\sigma^2+8q V^2_{\max}\epsilon_1/3)}\right), \\
\text{and }\mathbb{P}\left(\sum_{i=0}^{n_2-1} \left[\omega_{2i} - \E(\omega_{2i})\right]>\epsilon_2\right) & \le \exp\left(\frac{-\epsilon_2^2}{2(n_2\sigma^2+8q V^2_{\max}\epsilon_2/3)}\right).
\end{align*}
This above inequalities hold for each $f\in \mathcal{F}$ and $Q,Q'\in \mathcal{Q}$. Then, for any $\delta_1,\delta_2\ge 0$, we have that, with probability at least $1-\left(\delta_1+\delta_2+\floor{\frac{n}{q}}\beta(q)\right)$,
\begin{align*}
        \sum_{i=1}^{\floor{\frac{n}{q}}q}\psi_i-\floor{\frac{n}{q}}q\E(\psi)
        \leq & \frac{8 q V_{\max}^2}{3}\left[\ln\left(\frac{|\cQ|^2|\cF|}{\delta_1}\right)+\ln\left(\frac{|\cQ|^2|\cF|}{\delta_2}\right)\right]
        \\
             & +\sqrt{2n_1\sigma^2\ln\left(\frac{|\cQ|^2|\cF|}{\delta_1}\right)}+\sqrt{2n_2\sigma^2\ln\left(\frac{|\cQ|^2|\cF|}{\delta_2}\right)}.
\end{align*}
As such, with probability at least $1-\left(\delta_1+\delta_2+\floor{\frac{n}{q}}\beta(q)\right)$, \eqref{eqn:emp} can be bounded as:
\begin{align*}
\frac{1}{n}\sum_{i=1}^n \psi_i-\E(\psi_i)
=   & \frac{\floor{\frac{n}{q}}q}{n}\frac{1}{\floor{\frac{n}{q}}q}\left(\sum_{i=1}^{\floor{\frac{n}{q}}q}\psi_i-\floor{\frac{n}{q}}q\E(\psi_i)\right)+ \frac{1}{n}\left(\sum_{i=\floor{\frac{n}{q}}+1}^n \psi_i\right) \\
\le & \frac{8q V_{\max}^2}{3n}\left[\ln\left(\frac{|\cQ|^2|\cF|}{\delta_1}\right)+\ln\left(\frac{|\cQ|^2|\cF|}{\delta_2}\right)\right] \\
& +\frac{1}{n}\left[\sqrt{2n_1\sigma^2\ln\left(\frac{|\cQ|^2|\cF|}{\delta_1}\right)}+\sqrt{2n_2\sigma^2\ln\left(\frac{|\cQ|^2|\cF|}{\delta_2}\right)}\right]+\frac{8q V_{\max}^2}{n}  \\
\le & \frac{8q V_{\max}^2}{3n}\left[\ln\left(\frac{|\cQ|^2|\cF|}{\delta_1}\right)+\ln\left(\frac{|\cQ|^2|\cF|}{\delta_2}\right)\right]  \\
& +\sqrt{\frac{128(2q-1)V_{\max}^2 \epsilon^{(k)}}{n}\left[\ln\left(\frac{|\cQ|^2|\cF|}{\delta_1}\right)+\ln\left(\frac{|\cQ|^2|\cF|}{\delta_2}\right)\right]}+\frac{8q V_{\max}^2}{n}.
\end{align*}
where the inequality is due to the Cauchy-Schwarz inequality. Letting $\delta_1=\delta_2={\delta}/{2}$, the last term equals 
\begin{align*}
& \sqrt{\frac{256(2q-1)V_{\max}^2 \epsilon^{(k)} \ln(2\delta^{-1}|\cQ|^2|\cF|)}{n}}+\frac{16q V_{\max}^2\ln(2\delta^{-1}|\cQ|^2|\cF|)+24q V_{\max}^2}{3N} \\
\le  & \frac{\epsilon^{(k)}}{2}+\frac{(784q - 384)V_{\max}^2\ln(2\delta^{-1}|\cQ|^2|\cF|)+24q V_{\max}^2}{3N},
\end{align*}
where the last inequality again follows from the Cauchy-Schwarz inequality. 
    
Therefore, with probability at least $1-\left(\delta+\floor{\frac{n}{q}}\beta(q)\right)$,
\begin{eqnarray}\label{eqn:varepsilon1k}
\epsilon^{(k)}-\| Q^{(k)}-
            \oblo \widehat{Q}^{(k-1)}\|^2_{d_{\mathcal{D}}} \le \frac{\epsilon^{(k)}}{2}+\frac{(784q-384)V_{\max}^2\ln(2\delta^{-1}|\cQ|^2|\cF|)+24q V_{\max}^2}{3n}.
\end{eqnarray}
By completeness, we have $\| Q^{(k)}- \oblo \widehat{Q}^{(k-1)}\|_{d_{\mathcal{D}}}=0$. Consequently, 
    \begin{eqnarray*}
        \epsilon^{(k)}\le \frac{(784q-384)V_{\max}^2\ln(2\delta^{-1}|\cQ|^2|\cF|)+24q V_{\max}^2}{n},
    \end{eqnarray*}
    with probability at least $1-\left(\delta+\floor{\frac{n}{q}}\beta(q)\right)$. This completes the proof of Lemma \ref{lemma:sample-optimal-gap}. 
\end{proof}

\subsection{Proof of Theorem~\ref{thm:mf-fqi}}
\label{sec:prooffqi}

\begin{proof}
By the performance difference lemma in \cite{kakade2002approximately}, 
\begin{align*}
        & J(\pi^*) - J(\widehat{\pi}^{(K)}) = \frac{1}{1-\gamma} \E_{S \sim d^{\widehat{\pi}_K}}  [Q^*(S, \pi^*(S)) - Q^*(S, \widehat{\pi}(S))]
        \\
        = \;   & \frac{1}{1-\gamma} \E_{S \sim d^{\widehat{\pi}_K}}  [Q^*(S, \pi^*(S)) - \widehat{Q}^{(K)}(S, \pi^*(S))
        + \widehat{Q}^{(K)}(S, \pi^*(S)) - Q^*(S, \widehat{\pi}^{(K)}(S))]
        \\
        \leq \; & \frac{1}{1-\gamma} \E_{S \sim d^{\widehat{\pi}_{K}}}  [Q^*(S, \pi^*(S)) - \widehat{Q}^{(K)}(S, \pi^*(S)) + \widehat{Q}^{(K)}(S, \widehat{\pi}^{(K)}(S)) - Q^*(S, \widehat{\pi}^{(K)}(S))].
\end{align*}
For any $s \in \sspace$, the term $\widehat{Q}^{(K)}(s, \widehat{\pi}^{(K)}(s)) - Q^*(s, \widehat{\pi}^{(K)}(s))$ satisfies that,
\begin{align*}
             &
        \widehat{Q}^{(K)}(s, \widehat{\pi}^{(K)}(s)) - Q^*(s, \widehat{\pi}^{(K)}(s)) \\
        =  \;  & \widehat{Q}^{(K)}(s, \widehat{\pi}^{(K)}(s)) - \obo \widehat{Q}^{(K-1)}(s, \widehat{\pi}^{(K)}(s)) + \obo \widehat{Q}^{(K-1)}(s, \widehat{\pi}^{(K)}(s)) - \obo Q^*(s, \widehat{\pi}^{(K)}(s))
        \\
        = \;   & \widehat{Q}^{(K)}(s, \widehat{\pi}^{(K)}(s)) - \obo \widehat{Q}^{(K-1)}(s, \widehat{\pi}^{(K)}(s)) 
        \\
        &+ \gamma \sum_{s'} \tfun(s'|s, \widehat{\pi}^{(K)}(s)) [\widehat{Q}^{(K-1)}(s', \widehat{\pi}^{(K-1)}(s')) - Q^*(s', \pi^*(s'))]
        \\
        \leq \; & \widehat{Q}^{(K)}(s, \widehat{\pi}^{(K)}(s)) - \obo \widehat{Q}^{(K-1)}(s, \widehat{\pi}^{(K)}(s)) 
        \\
        &+ \gamma \sum_{s'} \tfun(s'|s, \widehat{\pi}^{(K)}(s)) [\widehat{Q}^{(K-1)}(s', \widehat{\pi}^{(K-1)}(s')) - Q^*(s', \widehat{\pi}^{(K-1)}(s'))],
\end{align*}
where the first equality is due to that $Q^* = \obo Q^{*}$, and the inequality arises from the fact that $Q^*(s, \pi^*(s)) \geq Q^*(s, \widehat{\pi}^{(K-1)}(s))$ for any $s \in \sspace$. Next, we recursively apply the above inequality to $\widehat{Q}^{(k)}-Q^*$ for $k=K-1,\cdots,1$, and obtain that
\begin{align}\nonumber
             &
        \widehat{Q}^{(K)}(s, \widehat{\pi}^{(K)}(s)) - Q^*(s, \widehat{\pi}^{(K)}(s))
        \\\nonumber
        \leq \; & \widehat{Q}^{(K)}(s, \widehat{\pi}^{(K)}(s)) - \obo \widehat{Q}^{(K-1)}(s, \widehat{\pi}^{(K)}(s))
        \\\nonumber
             & + \gamma \sum_{s'} \tfun(s'|s, \widehat{\pi}^{(K)}(s)) [\widehat{Q}^{(K-1)}(s', \widehat{\pi}^{(K-1)}(s')) - \obo \widehat{Q}^{(K-2)}(s', \widehat{\pi}^{(K-1)}(s')) ]
        \\\nonumber
             & + \gamma^2 \sum_{s''} \tfun(s''|s', \widehat{\pi}^{(K-1)}(s')) \sum_{s'} \tfun(s'|s, \widehat{\pi}^{(K)}(s)) [\widehat{Q}^{(K-2)}(s'', \widehat{\pi}^{(K-2)}(s'')) - Q^*(s'', \widehat{\pi}^{(K-2)}(s'')) ]
        \\\label{eqn:firstterm}
        \leq \; & \sum_{h=0}^{K-1} \sum_{s_h} \gamma^{h} \prob_h^{\widetilde{\pi}}(s_h|s) \Big[ \widehat{Q}^{(K-h)}(s_h, \widehat{\pi}^{(K-h)}(s_h)) - \obo \widehat{Q}^{(K-h-1)}(s_h, \widehat{\pi}^{(K-h)}(s_h)) \Big]      
        \\\label{eqn:secondterm}
             & + \gamma^K \sum_{s_K} \prob_K^{\widetilde{\pi}}(s_K|s) \Big[\widehat{Q}^{(0)}(s_K, \widehat{\pi}^{(0)}(s_K)) - Q^*(s_K, \widehat{\pi}^{(0)}(s_K)) \Big],
\end{align}
where $\widetilde{\pi}$ denotes a nonstationary policy that assigns actions according to $\widehat{\pi}^{(K-k)}$ at time $k$. We next upper bound \eqref{eqn:firstterm} and \eqref{eqn:secondterm}, respectively. 

For \eqref{eqn:firstterm}, for any $0\le k<K$, the difference between $\obo_{\ell} \widehat{Q}^{(k)}$ and $\obo \widehat{Q}^{(k)}$ is equal to $\widehat{\rfun}_{\ell}-\rfun$, and is upper bounded by $[\widehat{\rfun}_{\ell}(s,a)-\rfun(s,a)]\indicator(\event^c(s,a))$. As such, we can upper bound \eqref{eqn:firstterm} by
\begin{eqnarray*}
        &&\underbrace{\sum_{h=0}^{K-1} \sum_{s_h} \gamma^{h} \prob_h^{\widetilde{\pi}}(s_h|s) \Big[ \widehat{Q}^{(K-h)}(s_h, \widehat{\pi}^{(K-h)}(s_h)) - \obo_{\ell} \widehat{Q}^{(K-h-1)}(s_h, \widehat{\pi}^{(K-h)}(s_h)) \Big]}_{\zeta(s)}\\&+&\sum_{h=0}^{K-1} \sum_{s_h} \gamma^{h} \prob_h^{\widetilde{\pi}}(s_h|s) \left[\widehat{\rfun}_{\ell}(s_h,\widehat{\pi}^{(K-h-1)}(s_h))-\rfun(s_h,\widehat{\pi}^{(K-h-1)}(s_h)) \right]\indicator(\event^c(s_h,\widehat{\pi}^{(K-h-1)}(s_h))).
\end{eqnarray*}
Following similar arguments to bounding the first term on the right-hand-side of \eqref{eqn:onekeyequation}, we can show that the expected value of the second line is upper bounded by
\begin{eqnarray}\label{eqn:somesecondline}
        2\alpha R_{\max} \sum_{h=0}^{K-1}\gamma^h\le \frac{2\alpha R_{\max}}{1-\gamma}.
\end{eqnarray}
Additionally, following similar arguments in bounding the second term on the right-hand-side of \eqref{eqn:onekeyequation}, we can show that
\begin{align*}
        \E [\E_{S\sim d^{\widehat{\pi}}} \zeta(S)]
        \le & \frac{1}{1-\gamma}\sup_{\pi_1,\pi_2}\E \Big[\E_{\substack{S\sim d^{\pi_1}
        \\ S'\sim d^{\pi_2}(\cdot|S)}} \max_{k\ge 1} |\widehat{Q}^{(k)}(S', \pi_2(S'))-\obo_{\ell} \widehat{Q}^{(k-1)}(S',\pi_2(S'))|\Big]
        \\
        \le & \frac{1}{1-\gamma}\sup_{\pi_1,\pi_2}\Big\{\E \Big[\E_{\substack{S\sim d^{\pi_1}                                                                                                                     \\ S'\sim d^{\pi_2}(\cdot|S)}} \max_{k\ge 1} |\widehat{Q}^{(k)}(S', \pi_2(S'))-\obo_{\ell} \widehat{Q}^{(k-1)}(S',\pi_2(S'))|^2\Big]\Big\}^{1/2}
        \\
        \le & \frac{1}{\sqrt{\bar{c}}(1-\gamma)}\sup_{\pi}\Big\{\E \Big[\E_{\substack{S\sim \rho_0                                                                                                                \\ S'\sim d^{\pi}(\cdot|S)}} \max_{k\ge 1} |\widehat{Q}^{(k)}(S', \pi(S'))-\obo_{\ell} \widehat{Q}^{(k-1)}(S',\pi(S'))|^2\Big]\Big\}^{1/2}
        \\
        \le & \frac{\sqrt{B_{\mathcal{D}}}}{\sqrt{\bar{c}}(1-\gamma)}\Big\{\E \Big[\E_{(S,A)\sim d_{\mathcal{D}}} \max_{k\ge 1} |\widehat{Q}^{(k)}(S, A)-\obo_{\ell} \widehat{Q}^{(k-1)}(S,A)|^2\Big]\Big\}^{1/2}.
\end{align*}
Let $\event(\delta, q)$ denote the event that the inequality \eqref{eq:epsk} holds. The last line above can be decomposed into
\begin{eqnarray}\label{eqn:anotherfirstline}
        &&\frac{2\sqrt{B_{\mathcal{D}}}}{\sqrt{\bar{c}}(1-\gamma)}\Big\{\E \Big[\E_{(S,A)\sim d_{\mathcal{D}}} \max_{k\ge 1} |\widehat{Q}^{(k)}(S, A)-\obo_{\ell} \widehat{Q}^{(k-1)}(S,A)|^2\Big]\indicator(\event(\delta,q))\Big\}^{1/2}\\\label{eqn:anothersecondline}
        &+&\frac{2\sqrt{B_{\mathcal{D}}}}{\sqrt{\bar{c}}(1-\gamma)}\Big\{\E \Big[\E_{(S,A)\sim d_{\mathcal{D}}} \max_{k\ge 1} |\widehat{Q}^{(k)}(S, A)-\obo_{\ell} \widehat{Q}^{(k-1)}(S,A)|^2\Big]\indicator(\event^c(\delta,q))\Big\}^{1/2}.
\end{eqnarray}
By Lemma \ref{lemma:sample-optimal-gap}, \eqref{eqn:anotherfirstline} and \eqref{eqn:anothersecondline} can be upper bounded by
\begin{align*}
         O\left(\frac{\sqrt{B_{\mathcal{D}}}}{\sqrt{\bar{c}}(1-\gamma)} V_{\max}\sqrt{\frac{q}{n}}\sqrt{\ln \Big(\frac{|\mathcal{F}||\mathcal{Q}|}{\delta}\Big) }\right),
        \textup{ and } O\left(\frac{\sqrt{B_{\mathcal{D}}}}{\sqrt{\bar{c}}(1-\gamma)}V_{\max} \sqrt{\delta+\beta(q) n/q}\right). 
\end{align*}
Ssetting $\delta$ to $n^{-1}$ and $q$ to $C\ln(n)$ for some sufficiently large constant $C>0$, it follows from Assumption~\ref{con:indep-tuple} that,
\begin{equation}\label{eqn:somesecondline2}
        \begin{split}
            \E [\E_{S\sim d^{\widehat{\pi}}} \zeta(S)]
            =&O\left(\frac{\sqrt{B_{\mathcal{D}}}}{\sqrt{\bar{c}}(1-\gamma)} V_{\max}\left( \sqrt{\frac{q\ln (|\mathcal{F}||\mathcal{Q}|n) }{n}} \right)\right)
        \end{split}
\end{equation}

For \eqref{eqn:secondterm}, by Assumption \ref{con:value-func}, any Q-function is uniformly bounded by $V_{\max}$. Since $Q^{(0)}$ is a zero function, \eqref{eqn:secondterm} is upper bounded by $\gamma^K V_{\max}$. This, together with \eqref{eqn:somesecondline} and \eqref{eqn:somesecondline2}, yields that
\begin{eqnarray}\label{eqn:someimportantinequality1}
\begin{split}
            &\E \Big(\E_{S \sim d^{\widehat{\pi}_{K}}}  [\widehat{Q}^{(K)}(S, \widehat{\pi}^{(K)}(S)) - Q^*(S, \widehat{\pi}^{(K)}(S))]\Big)\\
            =&\gamma^K V_{\max}+\frac{2\alpha R_{\max}}{1-\gamma}+O\left(\frac{\sqrt{B_{\mathcal{D}}}V_{\max}}{\sqrt{\bar{c}}(1-\gamma)}\sqrt{\frac{q\ln (|\mathcal{F}||\mathcal{Q}|n) }{n}}\right)
\end{split}
\end{eqnarray}

Similarly, we have that,
\begin{align*}
             &
        Q^*(s, \pi^*(s)) - \widehat{Q}^{(K)}(s, \pi^*(s))
        \\
        =    & \obo Q^*(s, \pi^*(s)) - \obo \widehat{Q}^{(K-1)}(s, \pi^*(s)) + \obo \widehat{Q}^{(K-1)}(s, \pi^*(s)) - \widehat{Q}^{(K)}(s, \pi^*(s))
        \\
        =    & \gamma \sum_{s'} \tfun(s'|s, \pi^*(s)) \left[Q^*(s', \pi^*(s')) - \widehat{Q}^{(K-1)}(s', \widehat{\pi}^{(K-1)}(s')) \right] + \obo \widehat{Q}^{(K-1)}(s, \pi^*(s)) - \widehat{Q}^{(K)}(s, \pi^*(s))
        \\
        \leq & \gamma \sum_{s'} \tfun(s'|s, \pi^*(s)) \left[Q^*(s', \pi^*(s')) - \widehat{Q}^{(K-1)}(s', \pi^*(s')) \right] + \obo \widehat{Q}^{(K-1)}(s, \pi^*(s)) - \widehat{Q}^{(K)}(s, \pi^*(s)).
\end{align*}
Applying the above inequality $K-1$ times recursively, we obtain that,
\begin{align*}
        Q^*(s, \pi^*(s)) - \widehat{Q}^{(K)}(s, \pi^*(s))
        \leq &
        \sum_{h=0}^{K-1} \gamma^h \sum_{s_h}\prob_{h}^{\pi^*}(s_h|s) \left[\obo \widehat{Q}^{(K-h-1)}(s_h, \pi^*(s_h)) - \widehat{Q}^{(K-h)}(s_h, \pi^*(s_h)) \right]
        \\
             & + \gamma^K \sum_{s_K} \prob_{K}^{\pi^*}(s_K|s) \left[Q^*(s_K, \pi^*(s_K)) - \widehat{Q}^{(0)}(s_K, \pi^*(s_K)) \right].
\end{align*}
The second line above can be upper bounded by $\gamma^K \vub$. The first line above can be further decomposed into the sum of
\vspace{-0.01in}
\begin{eqnarray}\label{eqn:anotherveryimportantsecondline}
        \begin{split}
            &\underbrace{\sum_{h=0}^{K-1} \gamma^h \sum_{s_h}\prob_{h}^{\pi^*}(s_h|s) [\obo_{\ell} \widehat{Q}^{(K-h-1)}(s_h, \pi^*(s_h)) - \widehat{Q}^{(K-h)}(s_h, \pi^*(s_h))]}_{\zeta^*(s)}\\
            +&\sum_{h=0}^{K-1} \gamma^h \sum_{s_h}\prob_{h}^{\pi^*}(s_h|s) [\obo \widehat{Q}^{(K-h-1)}(s_h, \pi^*(s_h)) - \obo_{\ell} \widehat{Q}^{(K-h-1)}(s_h, \pi^*(s_h))].
        \end{split}
\end{eqnarray}

Following similar arguments to bound $\E [\E_{S\sim d^{\widehat{\pi}}} \zeta(S)]$, we can show that,
\begin{eqnarray*}\label{eqn:someothersecondline2}
        \E [\E_{S\sim d^{\widehat{\pi}}} \zeta^*(S)]=O\left(\frac{\sqrt{B_{\mathcal{D}}}V_{\max}}{\sqrt{\bar{c}}(1-\gamma)} \sqrt{\frac{q\ln (|\mathcal{F}||\mathcal{Q}|n) }{n}} \right).
\end{eqnarray*}
Meanwhile, the second term in \eqref{eqn:anotherveryimportantsecondline} can be upper bounded by $(1-\gamma)^{-1} \sum_{s'} d^{\pi^*}(s'|s)$ $|\rfun(s',\pi^*(s'))-\widehat{\rfun}_{\ell}(s',\pi^*(s'))|$. Using similar arguments to bounding $\E(I_2)$ in the proof of Theorem \ref{thm:mb}, it can be further upper bounded by $\bar{c}^{-1/2}(1-\gamma)^{-1}\sqrt{B_{\ld}^*\E_{(S,A)\sim d_{\mathcal{L}}} |\rfun(S,A)-\widehat{\rfun}_{\ell}(S,A)|^2}$. This, together with the above bounds, yields that,
\begin{eqnarray*}
        \E\left( \E_{S\sim d^{\widehat{\pi}}} [Q^*(S,\pi^*(S))-\widehat{Q}^{(K)}(S,\pi^*(S))] \right)=\gamma^K V_{\max}
        +O\left(\frac{\sqrt{B_{\ld}^*}}{\sqrt{\bar{c}}(1-\gamma)}\|\rfun-\widehat{\rfun}_{\ell}\|_{d_{\ld}}\right)\\+
        O\left(\frac{\sqrt{B_{\mathcal{D}}}V_{\max}}{\sqrt{\bar{c}}(1-\gamma)}  \sqrt{\frac{q\ln (|\mathcal{F}||\mathcal{Q}|n) }{n}} \right).
\end{eqnarray*}
This, together with \eqref{eqn:someimportantinequality1} and the performance difference lemma, yields the desired upper bound, which completes the proof of Theorem~\ref{thm:mf-fqi}.
\end{proof}

\subsection{Proof of Theorem~\ref{thm:mb}}
\begin{proof}
With the model-based method, the estimated optimal policy $\widehat{\pi}$ is uniquely determined by the estimated transition and reward functions $\widehat{\tfun}$ and $\widehat{\rfun}_{\ell}$. Let $\widehat{Q}$ denote the estimated optimal Q-function based on $\widehat{\tfun}$ and $\widehat{\rfun}_{\ell}$, and $\widehat{\pi}$ is given by the greedy policy with respect to $\widehat{Q}$. By the performance difference lemma in \citep{kakade2002approximately},
\begin{eqnarray}\label{eq:mb-decompo}
& & J(\pi^*) - J(\widehat{\pi}) = \frac{1}{1-\gamma} \E_{S \sim d_0^{\widehat{\pi}}}  [Q^*(S, \pi^*(S)) - Q^*(S, \widehat{\pi}(S))] \nonumber \\
& = & \frac{1}{1-\gamma} \E_{S \sim d_0^{\widehat{\pi}}}  [Q^*(S, \pi^*(S)) - \widehat{Q}(S, \pi^*(S)) + \widehat{Q}(S, \pi^*(S)) - Q^*(S, \widehat{\pi}(S))] \\
&  \leq & \frac{1}{1-\gamma} \E_{S \sim d_0^{\widehat{\pi}}}[\underbrace{\widehat{Q}(S, \widehat{\pi}(S)) - Q^*(S, \widehat{\pi}(S))}_{I_1}] 
+ \frac{1}{1-\gamma}\E_{S \sim d_0^{\widehat{\pi}}}  [\underbrace{Q^*(S, \pi^*(S)) - \widehat{Q}(S, \pi^*(S))}_{I_2} ], \nonumber
\end{eqnarray}
where the last inequality is due to that $\widehat{Q}(s, \widehat{\pi}(s)) \geq \widehat{Q}(s, \pi^*(s))$ for any $s \in \sspace$. We next upper bound $\E(I_1)$ and $\E(I_2)$, respectively. 

For $\E(I_1)$, with some calculations, we have that,
\begin{eqnarray}\label{eq:mb-term2}
& &\widehat{Q}(s, \widehat{\pi}(s)) - Q^*(s, \widehat{\pi}(s)) \\
& = &\widehat{\rfun}_{\ell}(s, \widehat{\pi}(s)) - \rfun(s, \widehat{\pi}(s))+ \gamma \sum_{s'}\widehat{\tfun}(s'|s, \widehat{\pi}(s))\widehat{Q}(s', \widehat{\pi}(s')) - \gamma \sum_{s'}\tfun(s'|s, \widehat{\pi}(s))Q^*(s', \pi^*(s')) \nonumber\\
& \leq & \widehat{\rfun}_{\ell}(s, \widehat{\pi}(s)) - \rfun(s, \widehat{\pi}(s)) + \gamma \sum_{s'}\widehat{\tfun}(s'|s, \widehat{\pi}(s))\widehat{Q}(s', \widehat{\pi}(s')) - \gamma \sum_{s'}\tfun(s'|s, \widehat{\pi}(s))Q^*(s', \widehat{\pi}(s')) \nonumber \\
& = & [\widehat{\rfun}_{\ell}(s, \widehat{\pi}(s)) - \rfun(s, \widehat{\pi}(s))] \indicator(\event(s, \widehat{\pi}(s)) 
+ [\widehat{\rfun}_{\ell}(s, \widehat{\pi}(s)) - \rfun(s, \widehat{\pi}(s))] \indicator(\event^c(s, \widehat{\pi}(s))) \nonumber \\
& & +  \gamma \sum_{s'}[\widehat{\tfun}(s'|s, \widehat{\pi}(s)) - \tfun(s'|s, \widehat{\pi}(s))]\widehat{Q}(s', \widehat{\pi}(s')) \nonumber \\
& & + \gamma \sum_{s'}\tfun(s'|s, \widehat{\pi}(s))[\widehat{Q}(s', \widehat{\pi}(s')) - Q^*(s', \widehat{\pi}(s'))] \nonumber \\
& \leq & 0 \times \indicator(\event(s, \widehat{\pi}(s))) + 2 \rub \indicator(\event^c(s, \widehat{\pi}(s))) 
+ \gamma \sum_{s'}[\widehat{\tfun}(s'|s, \widehat{\pi}(s)) - \tfun(s'|s, \widehat{\pi}(s))]\widehat{Q}(s', \widehat{\pi}(s')) \nonumber \\
& + & \gamma \sum_{s'}\tfun(s'|s, \widehat{\pi}(s))[\widehat{Q}(s', \widehat{\pi}(s')) - Q^*(s', \widehat{\pi}(s'))], \nonumber
\end{eqnarray}
where the first equality follows from the Bellman equation,  and that $Q^*(s', \pi^*(s')) \geq Q^*(s', \widehat{\pi}(s'))$ for any $s'$, and the second inequality is by the definition of $\event(s, a)$ and Assumption~\ref{con:value-func}. Additionally, by Assumption~\ref{con:value-func}, the second to last line is upper bounded by $2\gamma V_{\max}\textup{TV}_{\widehat{\tfun}}(s, \widehat{\pi}(s))$. This, together with \eqref{eq:mb-term2}, yields that,
\begin{align}\label{eqn:anotherinequality}
        \begin{split}
            \widehat{Q}(s, \widehat{\pi}(s)) - Q^*(s, \widehat{\pi}(s)) 
            \leq&
            2 \rub \indicator(\event^c(s, \widehat{\pi}(s))) +  2\gamma \vub
            \textup{TV}_{\widehat{\tfun}}(s, \widehat{\pi}(s))
            \\
            &+ \gamma \sum_{s'}\prob^{\widehat{\pi}}_{1}(s'|s)\left[\widehat{Q}(s', \widehat{\pi}(s')) - Q^*(s', \widehat{\pi}(s')) \right].
        \end{split}
\end{align}
Applying the same inequality to the last line, we obtain that,
\begin{align*}
         & \widehat{Q}(s, \widehat{\pi}(s)) - Q^*(s, \widehat{\pi}(s))
        \\
        \leq \;
         & 2 \rub \indicator(\event^c(s, \widehat{\pi}(s))) + 2\gamma \vub
        \textup{TV}_{\widehat{\tfun}}(s, \widehat{\pi}(s))
        \\
         & + \gamma \sum_{s'}\prob^{\widehat{\pi}}_{1} (s'|s) \left\{
        2 \rub \indicator(\event^c(s', \widehat{\pi}(s'))) + 2\gamma \vub
        \textup{TV}_{\widehat{\tfun}}(s', \widehat{\pi}(s'))\right\}
        \\
         & + \gamma^2 \sum_{s''}  \prob^{\widehat{\pi}}_{2}(s''|s) \left[\widehat{Q}(s'', \widehat{\pi}(s'')) - Q^*(s'', \widehat{\pi}(s'')) \right],
\end{align*}
Recursively applying this inequality $K$ times yields that,
\begin{align}\label{eqn:recursive}
        \begin{split}
            & \widehat{Q}(s, \widehat{\pi}(s)) - Q^*(s, \widehat{\pi}(s))
            \\
            \leq \;
            & 2\rub \left[\sum_{k=0}^K \gamma^k \sum_{s_k} \prob_k^{\widehat{\pi}}(s_k|s) \indicator(\event^c(s_k, \widehat{\pi}(s_k))) \right]
            \\
            &+ 2\gamma \vub \left[\sum_{k=0}^K \gamma^k \sum_{s_k} \prob_k^{\widehat{\pi}}(s_k|s)\textup{TV}_{\widehat{\tfun}}(s_k, \widehat{\pi}(s_k)) \right]
            \\
            & + \gamma^K \sum_{s_K}  \prob^{\widehat{\pi}}_{K}(s_K|s) \left[\widehat{Q}(s_K, \widehat{\pi}(s_K)) - Q^*(s_K, \widehat{\pi}(s_K)) \right],
        \end{split}
\end{align}
with $s_0=s$ and $\prob_0^{\pi}(s_0|s)=\mathbb{I}(s_0=s)$ for any $s_0$, $s$, $\pi$. Letting $K \rightarrow \infty$, we have that,
\begin{equation}\label{eqn:Kinfity}
        \begin{split}
            \widehat{Q}(s, \widehat{\pi}(s)) - Q^*(s, \widehat{\pi}(s))
            \leq
            \frac{2 \rub}{1-\gamma}  \E_{S \sim d^{\widehat{\pi}}(\cdot|s)} \indicator(\event^c(S, \widehat{\pi}(S))) + \frac{2\gamma \vub}{1-\gamma} \E_{S \sim d^{\widehat{\pi}}(\cdot|s)} \left[
            \textup{TV}_{\widehat{\tfun}}(S, \widehat{\pi}(S))
            \right] .
        \end{split}
\end{equation}
Consequently,
\begin{eqnarray}\label{eqn:onekeyequation}
\E(I_1) & \leq & \frac{2 \rub}{1-\gamma}  \E\left[\E_{S_0 \sim d_0^{\widehat{\pi}}, S \sim d^{\widehat{\pi}}(\cdot|S_0)} \indicator(\event^c(S, \widehat{\pi}(S)))\right]  \\
 & & + \frac{2\gamma \vub}{1-\gamma} \E\left\{ \E_{S_0 \sim d_0^{\widehat{\pi}}, S \sim d^{\widehat{\pi}}(\cdot|S_0)} \left[\textup{TV}_{\widehat{\tfun}}(S, \widehat{\pi}(S)) \right] \right\}.
\end{eqnarray}
Here the outer expectation is taken with respect to three variables: the variability in the estimated policy $\widehat{\pi}$, the probability associated with the event $\event$, and the estimated transition $\widehat{\tfun}$.

Note that the first term on the right-hand-side of \eqref{eqn:onekeyequation} can be upper bounded by
\begin{eqnarray*}
        \frac{2R_{\max}}{1-\gamma}\sup_{\pi}\E\left[\E_{S_0 \sim d_0^{\pi}, S \sim d^{\pi}(\cdot|S_0)} \indicator(\event^c(S, \pi(S)))\right]\le  \frac{2R_{\max}}{1-\gamma}\sup_{\pi,a}\E\left[\E_{S_0 \sim d_0^{\pi}, S \sim d^{\pi}(\cdot|S_0)} \indicator(\event^c(S, a))\right],
\end{eqnarray*}
where the supremum is taken over all deterministic policies. As such, the outer expectation is taken solely with respect to the probability associated with the event $\event$. By Assumption \ref{con:uncertainty}, by interchanging the two expectations, we can upper bound the first term on the right-hand-side of \eqref{eqn:onekeyequation} by
\begin{eqnarray*}
        \frac{2R_{\max}}{1-\gamma}\sup_{\pi,a}\E_{S_0 \sim d_0^{\pi}, S \sim d^{\pi}(\cdot|S_0)} \prob(\event^c(S, a))=\frac{2R_{\max}\alpha}{1-\gamma}.
\end{eqnarray*}
Similarly, we can upper bound the second term on the right-hand-side of \eqref{eqn:onekeyequation} by
\begin{eqnarray*}
        \frac{2\gamma \vub}{1-\gamma} \sup_{\pi} \E\left\{ \E_{S_0 \sim d_0^{\pi}, S \sim d^{\pi}(\cdot|S_0)} \left[
        \textup{TV}_{\widehat{\tfun}}(S, \pi(S))
        \right] \right\}.
\end{eqnarray*}
Since $\rho_0$ has the full support, $\bar{c} \equiv \inf_{s} \rho_0(s)>0$. By Assumption \ref{con:coverage}, the Cauchy-Schwarz inequality and the change-of-measure theorem, it can be further upper bounded by
\begin{eqnarray}\label{eqn:anotherkeyinequality}
        \begin{split}
            &\frac{2\gamma \vub}{1-\gamma} \left(\sup_{\pi}\E\Big\{ \E_{S_0 \sim d_0^{\pi}, S \sim d^{\pi}(\cdot|S_0)} \Big[
            \textup{TV}_{\widehat{\tfun}}(S, \pi(S))
            \Big]^2 \Big\}\right)^{1/2}\\
            \le&\frac{2\gamma \vub}{1-\gamma}\left(\bar{c}^{-1}\sup_{\pi}\E\Big\{ \E_{S_0 \sim \rho_0, S \sim d^{\pi}(\cdot|S_0)} \Big[
            \textup{TV}_{\widehat{\tfun}}(S, \pi(S))
            \Big]^2 \Big\}\right)^{1/2}\\
            =&\frac{2\gamma \vub}{1-\gamma}\left(\bar{c}^{-1}\sup_{\pi}\E\Big\{ \E_{S \sim d^{\pi}} \Big[
            \textup{TV}_{\widehat{\tfun}}(S, \pi(S))
            \Big]^2 \Big\}\right)^{1/2}\\
            \le&\frac{2\gamma \vub}{1-\gamma}\left(\bar{c}^{-1}B_{\mathcal{D}}\E\Big\{ \E_{(S,A)\sim d_{\mathcal{D}}} \Big[
            \textup{TV}_{\widehat{\tfun}}(S, A)
            \Big]^2 \Big\}\right)^{1/2}=\frac{2\gamma V_{\max} \sqrt{B_{\mathcal{D}}}}{\sqrt{\bar{c}}(1-\gamma)}\|\tfun-\widehat{\tfun}\|_{d_{\mathcal{D}}}.
        \end{split}
\end{eqnarray}
Therefore, we obtain that,
\begin{equation}\label{eq:mb-term1}
\E(I_1) \le \frac{2\alpha R_{\max}}{1-\gamma}+\frac{2\gamma V_{\max} \sqrt{B_{\mathcal{D}}}}{\sqrt{\bar{c}}(1-\gamma)}\|\tfun-\widehat{\tfun}\|_{d_{\mathcal{D}}}.
\end{equation}

For $\E(I_2)$, similar to \eqref{eq:mb-term2} and \eqref{eqn:anotherinequality}, we can show that,
\begin{align*}
             & Q^*(s, \pi^*(s)) - \widehat{Q}(s, \pi^*(s))
        \\
        =  \;  & \rfun(s, \pi^*(s)) - \widehat{\rfun}_{\ell}(s, \pi^*(s))
        + \gamma \sum_{s'} \tfun(s'|s, \pi^*(s)) Q^*(s', \pi^*(s'))
        - \gamma \sum_{s'} \widehat{\tfun}(s'|s, \pi^*(s)) \widehat{Q}(s', \widehat{\pi}(s'))
        \\
        \leq \; & \rfun(s, \pi^*(s)) - \widehat{\rfun}_{\ell}(s, \pi^*(s))
        + \gamma \sum_{s'} \tfun(s'|s, \pi^*(s)) Q^*(s', \pi^*(s'))
        - \gamma \sum_{s'} \widehat{\tfun}(s'|s, \pi^*(s)) \widehat{Q}(s', \pi^*(s'))
        \\
        \leq \; & |\widehat{\rfun}_{\ell}(s, \pi^*(s)) - \rfun(s, \pi^*(s))|
        + \gamma \sum_{s'}[\tfun(s'|s, \pi^*(s)) - \widehat{\tfun}(s'|s, \pi^*(s))]\widehat{Q}(s', \pi^*(s'))
        \\ &+ \gamma \sum_{s'}\tfun(s'|s, \pi^*(s))(Q^*(s', \pi^*(s')) -\widehat{Q}(s', \pi^*(s')))
        \\
        \leq \; & |\widehat{\rfun}_{\ell}(s, \pi^*(s)) - \rfun(s, \pi^*(s))|
        + 2\gamma V_{\max} \textup{TV}_{\widehat{\tfun}}(s, \pi^*(s))+ \gamma \sum_{s'}\prob_1^{\pi^*}(s'|s)[Q^*(s', \pi^*(s')) -\widehat{Q}(s', \pi^*(s'))].
\end{align*}
Similar to \eqref{eqn:recursive}, \eqref{eqn:Kinfity} and \eqref{eqn:onekeyequation}, by recursively applying the above inequality, we obtain that
\begin{align*}
        \E I_2
        \leq & \frac{1}{1-\gamma}\E \Big[\E_{S_0\sim d_0^{\widehat{\pi}}, S\sim d^{\pi^*}(\cdot|S_0)} |\widehat{\rfun}_{\ell}(S, \pi^*(S)) - \rfun(S, \pi^*(S))| \Big]
        \\
        &+\frac{2\gamma V_{\max}}{1-\gamma}\E \Big[\E_{S_0\sim d_0^{\widehat{\pi}}, S\sim d^{\pi^*}(\cdot|S_0)} \textup{TV}_{\widehat{\tfun}}(S, \pi^*(S))\Big].
\end{align*}
By similar arguments for \eqref{eqn:anotherkeyinequality}, the right-hand-side can be further upper bounded by
\begin{eqnarray*}
        \frac{\sqrt{B_{\ld}^*}\|\rfun-\widehat{\rfun}_{\ell}\|_{d_{\ld}}}{\sqrt{\bar{c}}(1-\gamma)}+\frac{2\gamma V_{\max} \sqrt{B_{\mathcal{D}}}}{\sqrt{\bar{c}}(1-\gamma)}\|\tfun-\widehat{\tfun}\|_{d_{\mathcal{D}}}.
\end{eqnarray*}
This, together with \eqref{eq:mb-decompo} and \eqref{eq:mb-term1}, yields that
\begin{eqnarray*}
        \E [J(\pi^*)-J(\widehat{\pi})]\le \frac{2\alpha R_{\max}}{(1-\gamma)^2}+\frac{\sqrt{B_{\ld}^*}\|\rfun-\widehat{\rfun}_{\ell}\|_{d_{\ld}}}{\sqrt{\bar{c}}(1-\gamma)^2}+\frac{4\gamma V_{\max} \sqrt{B_{\mathcal{D}}}}{\sqrt{\bar{c}}(1-\gamma)^2}\|\tfun-\widehat{\tfun}\|_{d_{\mathcal{D}}},
\end{eqnarray*}
leading to the desired regret bound. This completes the proof of Theorem~\ref{thm:mb}.
\end{proof}

\subsection{Proof of Corollary~\ref{coro1}}
\begin{proof}
For Algorithm~\ref{alg:mb-sorl}, Theorem~\ref{thm:mb} establishes that,
\begin{eqnarray*}
        \E [J(\pi^*)-J(\widehat{\pi})]\le \underbrace{\frac{4\gamma V_{\max} \sqrt{B_{\mathcal{D}}}}{\sqrt{\bar{c}}(1-\gamma)^2}\|\tfun-\widehat{\tfun}\|_{d_{\mathcal{D}}}}_{I_{11}}+\underbrace{\frac{2\alpha R_{\max}}{(1-\gamma)^2}}_{I_{12}}+\underbrace{\frac{\sqrt{B_{\ld}^*}\|\rfun-\widehat{\rfun}_{\ell}\|_{d_{\ld}}}{\sqrt{\bar{c}}(1-\gamma)^2}}_{I_{13}}.
\end{eqnarray*}
With probability at least $n^{-1}$, we have that,
    \begin{align*}
        I_{11} \leq \sqrt{\frac{2|\sspace|^3|\aspace|\log(n |\sspace||\aspace|)}{n}}.
    \end{align*}
Since $|\ud|$ is infinite, then $I_{11} = o_p(1)$. Because $\alpha$ is sufficient small, we have $I_{12} \leq I_{13}$. Recognizing that $\gamma$ and $\bar{c}$ are some constants, we have that,
    \begin{align*}
        \E [J(\pi^*)-J(\widehat{\pi})] = O_p\left(\sqrt{B_{\ld}^*}\|\rfun-\widehat{\rfun}_{\ell}\|_{d_{\ld}}\right).
    \end{align*}

Next, for Algorithm~\ref{alg:mf-sorl}, Theorem~\ref{thm:mf-fqi} establishes that,
    \begin{align*}
        \E[J(\pi^*) - J(\widehat{\pi})]
        \leq&
        \underbrace{\frac{2\gamma^K V_{\max}}{1-\gamma}}_{I_{21}}+ \underbrace{\frac{2\alpha R_{\max}}{(1-\gamma)^2}}_{I_{22}}+\underbrace{\frac{c_1\sqrt{B_{\ld}^*}}{(1-\gamma)^2}\|\rfun-\widehat{\rfun}_{\ell}\|_{d_{\ld}}}_{I_{23}}
        \\
        &+\underbrace{\frac{c_1V_{\max}\sqrt{B_{\mathcal{D}}}}{(1-\gamma)^2\sqrt{n}}\sqrt{\ln (|\mathcal{F}||\mathcal{Q}|n)}}_{I_{24}}.
    \end{align*}
When $K$ is sufficiently large and $\alpha$ is sufficiently small, we have $I_{21} \leq I_{23}$ and $I_{22} \leq I_{23}$. Additionally, for $I_{24}$, it equals 0 because $|\ud|$ is infinite. Together we have that,
    \begin{align*}
        \E[J(\pi^*) - J(\widehat{\pi})] = O\left( \sqrt{B_{\ld}^*}\|\rfun-\widehat{\rfun}_{\ell}\|_{d_{\ld}}\right).
    \end{align*}
    This completes the proof of Corollary~\ref{coro1}.
\end{proof}

\section{Numerical experiments}
\label{sec:experiments-details}

\subsection{Details on the illustrative example in Section \ref{sec:key-ideas}}
\label{sec:detail-toy-env}

\textbf{Environment.} We consider the following synthetic environment.
\vspace{-0.5em}
\begin{itemize}
    \item The state space: $\sspace = \{0, 1, 2\} \times \{0, 1, 2\}$. The neighbour set of $s \in \sspace$ is defined as $\mathcal{N}(s) = \{b \in \sspace \mid \|s - b\|_1 \leq 1\}$.

    \item The action space: $\aspace = \{(0, 0), (0, 1), (1, 0), (0, -1), (-1, 0)\}$.

    \item The reward $R$ is generated as follows: 
          \begin{eqnarray*}
              R\sim 
              \begin{cases}
                  \mathcal{N}(10, 1),    & \textup{ if state } s \textup{ transits to }(0, 0) \textup{ after executing action } a;
                  \\
                  \mathcal{N}(-0.1, 10), & \textup{ otherwise}.
              \end{cases}
          \end{eqnarray*}

    \item The transition function $\tfun$:
          \vspace{-0.5em}
          \begin{eqnarray*}
              \tfun(s'|s, a)=
              \begin{cases}
                  0.9,                  & \textup{if } s'=s+a                \\
                  0.1/|\mathcal{N}(s)|, & \textup{if } s' \in \mathcal{N}(s) \\
                  0.0,                  & \textup{otherwise}
              \end{cases}.
          \end{eqnarray*}
\end{itemize}
\vspace{-0.5em}

\noindent \textbf{Learning $\rfun_{\ell}$.} We estimate the reward function $\rfun(s, a)$ by the conditional empirical mean,
\begin{align*}
    \widehat{\rfun}(s, a) = \frac{1}{|\rfun_{s, a}|} \sum_{(\tilde{s}, \tilde{a}, \tilde{r}, \tilde{s}') \in \rfun_{s, a}} \tilde{r},
\end{align*}
where $\rfun_{s, a} = \{(\tilde{s}, \tilde{a}, \tilde{r}, \tilde{s}') \in \ld | \tilde{s}=s, \tilde{a} = a\}$. We estimate the standard deviation of reward at $(s, a)$ as,
\begin{align*}
    \Delta(s, a) = \sqrt{\frac{1}{|\rfun_{s, a}|} \sum_{(\tilde{s}, \tilde{a}, \tilde{r}, \tilde{s}') \in \rfun_{s, a}} (\tilde{r} - \widehat{\rfun}(s, a))^2}.
\end{align*}
We then set  $\widehat{\rfun}_{\ell}(s, a) = \hat{r}(s, a) - z_{1-\alpha/2}\times \Delta(s, a)$, where $z_{1-\alpha/2}$ is the $(1-\alpha/2)$-th quantile of standard normal random variable. We set $\alpha = 0.05$, and $z_{1-\alpha/2} \approx 2.0$.

\noindent \textbf{Learning $Q^*$.} We employ the classical tabular Q-learning algorithm \citep{sutton2018reinforcement} to learn the Q table of the optimal policy $\pi^*$. We repeat the update rule for Q table 60000 times with the learning rate $0.001$, and set the discount factor as $\gamma=0.95$.

\noindent \textbf{Evaluating the learned policy.} For this example, we assess the greedy policy derived from the estimated optimal Q-function $\widehat{Q}$, using the cumulative reward $\E^{\widehat{\pi}}[\rfun(S_1, A_1) + \rfun(S_2, A_2)]$ as the evaluation criterion. We estimate this criterion by rolling out 500 trajectories, each with two time points, under the policy $\widehat{\pi}$, then computing the cumulative reward as ${500}^{-1}\sum_{i=1}^{500}[R_{i1}+R_{i2}]$.

\subsection{Details on the synthetic environment in Section \ref{sec:synthetic-env}}
\label{sec:detail-synthetic-env}

\textbf{Environment.} We consider the following synthetic environment.
\vspace{-0.5em}
\begin{itemize}
     \item The state space: $\sspace = \mathbb{R}^2$.

     \item The action space: $\aspace = \{-1, 0, 1\}$.

     \item The reward function: for any $s=(s_1, s_2) \in \sspace$ and $a \in \aspace$, the reward $\rfun(s, a)$ is a Gaussian random variable $\mathcal{N}(\mu(s, a), \sigma^2(s, a))$,  with mean $\mu(s, a) = 5 a \times (s_1+s_2)$ and standard deviation $\sigma(s, a) = 0.8$ if $a=0$, and $\sigma(s, a) = 0.1$ if $a\neq 0$. As such, when $(s_1 + s_2) / 2 > 0$, $a = 1$ is the optimal action; otherwise, $a = -1$ is the optimal action.

     \item The transition function $\tfun(\cdot|s, a)$: it follows a conditional Gaussian distribution with mean $\mu'(s, a)$:
           \begin{eqnarray*}
              \mu'(s, a) =
               a\begin{pmatrix}
                   -0.77, & 0.23 \\
                   0.23,  & 0.77
               \end{pmatrix} s
           \end{eqnarray*}
           and covariance $0.01 \times \mathbf{I}_{2\times 2}$. Here, $\mathbf{I}_{2\times 2}$ is a 2-by-2 identical matrix.
\end{itemize}

\noindent \textbf{Generating $\ld$ and $\ud$.} We generate $\ld$ using a random policy to roll out multiple trajectories, each with a horizon of 30 time points. Similarly, we generate $\ud$ with $10 \times n$ trajectories, but without recording the rewards. This way, $\ld$ has a full coverage on $\sspace \times \aspace$. To simulate the case when $\ld$ has a limited action coverage, we remove 80\% of the tuples in $\ld$ with sub-optimal actions.

\noindent \textbf{Learning $\rfun_{\ell}$.}
We first encode the discrete actions as a two-dimensional one-hot vector $(a_1, a_2) \in \{0, 1\}^2$. Then, given $s=(s_1, s_2) \in \sspace$ and one-hot-encoding action $(a_1, a_2)$, we compute all polynomial combinations of $(s_1, s_2, a_1, a_2)$ with degree up to 2, resulting in a $9$-dimensional feature vector $g: \sspace \times \aspace \to \mathbb{R}^9$. Given $\ld$, we fit OLS for the reward $R$ given $g(s, a)$, and obtain the estimated reward. Next, we compute $\Delta_{\textup{\sug}}(s, a)$. In practice, for each $(s,a)$ pair in the unlabeled data, we compute $\Delta_{\sug}(s,a)$ and retain only those pairs whose value falls below the $q$th quantile, where $q$ is set to 0.9 under full coverage and 0.3 under partial coverage.

\noindent\textbf{Learning $\pi^*$.}
We use FQI to learn $\pi^*$ with the discount rate $\gamma = 0.99$, and the maximum iteration equal to 500. We stop the iterations either when the number of iterations reaches $K$, or $\sum_{(S, A) \in \ld} |Q^{(k+1)}(S, A) - Q^{(k)}(S, A)| \leq 10^{-6} \times \sum_{(S, A) \in \ld} |Q^{(t)}(S, A)|$.

\noindent\textbf{Evaluating the learned policy.} We evaluate the learned policy $\widehat{\pi}$ by computing the regret, i.e., $\E\left[J(\widehat{\pi})\right] - \E\left[J(\pi^*)\right]$. To achieve this, we approximate the $\E\left[J(\pi^*)\right]$ and $\E\left[J(\widehat{\pi})\right]$ by Monte Carlo estimation. Specifically, given a policy $\pi$, we generate $N$ trajectories with $T$ time points. and approximate $\E[J(\pi)]$ by $\frac{1}{N}\sum_{i=1}^N\sum_{t=1}^T \gamma^t R_t$. In our experiments, we set $N=100$ and $T=20$.

\subsection{Details on the MuJoCo environments}
\label{sec:detail-mujoco-env}

\textbf{Environment.} We consider the version `-v2'' of the MuJoCo environments, and we consider three datasets, halfcheetah, walker2d, and hopper. In addition, we consider three type of unlabeled datasets: 
\vspace{-0.5em}
\begin{itemize}
\item ``full-replay'': contains the replay buffer from a fully trained policy; 
\item ``medium-replay'': contains the replay buffer when a policy is partially-trained; 
\item ``medium'': contains the tuples samples from a partially-trained policy. 
\end{itemize}

\noindent\textbf{Generating $\ld$ and $\ud$.} We generate $\ld$ and $\ud$ based on the D4RL benchmark data \citep{fu2020d4rl}. Specifically, we randomly sample 10k transitions from the ``expert'' dataset to serve as $\ld$, and we create $\ud$ by using transitions from ``full-replay'', ``medium-replay'', or ``medium'' datasets.

\noindent\textbf{Learning $\rfun_{\ell}$.} To account for the nonlinear dependence between $(s, a) \in \sspace \times \aspace$ and $r \in \mathbb{R}$, we set $g(\cdot, \cdot)$ to an approximated radial-basis-function kernel feature map using random Fourier features \citep{rahimi2008weighted}. Similar to Appendix~\ref{sec:detail-synthetic-env},  we apply OLS to the reward $R$ given $g(s, a)$, and we use random forest for the auxiliary reward $\widehat{R}_{\aux}$. 

\noindent\textbf{Learning $\tfun$.} We model $\tfun$ using an ensemble of $K$ single-hidden-layer neural networks $\{ \widehat\tfun^{(k)}(\cdot|s, a) = \mathcal{N}(\mu^{(k)}(s, a), \Sigma^{(k)}(s, a)) \}_{k=1}^K$, each outputting the mean and variance of a multivariate Gaussian distribution over the next state. Due to the large dimensionality of $\Sigma^{(1)}, \ldots, \Sigma^{(K)}$, in our implementation, we set the covariance matrices $\{\Sigma^{(k)}\}_{k=1}^K$ as the diagonal matrices. Each neural network's hidden layer contains 100 units. One neural network from the ensemble is randomly chosen to roll out the observations of the next state. A critical aspect of rolling out the samples is the maximum time spent interacting with the environment, i.e., the maximum length of trajectories generated by $\widehat{\rfun}_{\text{SPL}}$ and $\widehat{\tfun}$. Following the recommendation in \citet{yu2020mopo}, we interact with the estimated environment $k$ times. Similar to \citet{yu2020mopo}, we have found that setting $k=5$ for the halfcheetah and hopper environments, and setting $k=1$ for the walker2d environment, leads to good empirical performance.

\noindent\textbf{Learning $\pi^*$.} We adopt the network architecture in \citet{haarnoja2018sac}. Specifically, the critic network is a three-layer MLP with 256 hidden nodes and ReLU activation. The policy network is a Gaussian policy network that enables automatic entropy tuning. Parameters in these neural networks are trained using the Adam optimizer \citep{kingma2014adam}, with the learning rate set to $3\times10^{-4}$. At each update, we randomly draw 256 samples from the replay buffer for parameter estimation. Besides, we use the target network trick to train the critic: the parameters in the target critic network is updated by $w' \leftarrow (1 - \tau)w' + \tau w$ where $w$ is the corresponding parameter in the critic network, and $\tau$ is fixed at $5 \times 10^{-3}$. Finally, we set the discount rate $\gamma = 0.99$. 

\noindent\textbf{Evaluating the learned policy.} The criterion for assessing a policy $\pi$ is the normalized cumulative reward averaged over five random seeds. Computation involves two steps. First, we compute the non-normalized cumulative reward, i.e., $\E^\pi\left[\sum_{t=1}^{T} \rfun(S_t, A_t)\right]$, where $T$ is fixed at 1000. This is approximated using Monte Carlo estimation by generating 10 trajectories, each with $T = 1000$ time points. Next, these cumulative rewards $\widehat{S}$ are normalized by $(\widehat{S} - S_{\min})/(S_{\max} - S_{\min})$, where $S_{\min}$ represents the cumulative reward of a random policy, and $S_{\max}$ corresponds to that of a fully trained policy.

\end{document}